\documentclass[11pt, a4paper]{article}
\usepackage[utf8]{inputenc}
\usepackage{subcaption} 
\usepackage{enumitem}
\usepackage{amsmath}
\usepackage{amssymb}
\usepackage{amsthm}
\usepackage{graphicx}
\usepackage{booktabs}
\usepackage{tabularx}
\usepackage{hyperref}
\usepackage{algorithm}
\usepackage{algpseudocode}
\usepackage{caption}
\usepackage{times}
\usepackage{xcolor}
\usepackage{tikz}
\usepackage[preprint]{neurips_2026}
\usetikzlibrary{positioning,arrows.meta,calc}
\newcommand{\mat}[1]{\textbf{#1}}
\hypersetup{
    colorlinks=true,
    linkcolor=blue,
    filecolor=magenta,
    urlcolor=cyan,
}

\newcommand{\eh}[1]{$\ll$\textsf{\color{blue} EH : #1}$\gg$}

\newcommand{\linda}[1]{$\ll$\textsf{\color{magenta} Linda : #1}$\gg$}

\newcommand{\newalg}{{RubiConv}}
\newcommand{\ctversion}{{RubiConv-CooleyTukey}}

\def\M{{\mathcal M}}

\newcommand{\ignore}[1]{}

\def\bold0{\mathbf{0}}

\def\epsilon{\varepsilon}

\newtheorem{theorem}{Theorem}[section]

\newtheorem{corollary}[theorem]{Corollary}

\newcommand{\newreptheorem}[2]{%
\newenvironment{rep#1}[1]{%
 \def\rep@title{#2 \ref{##1}}%
 \begin{rep@theorem}}%
 {\end{rep@theorem}}}

\newreptheorem{theorem}{Theorem}
\newreptheorem{lemma}{Lemma}
\newreptheorem{proposition}{Proposition}
\newreptheorem{claim}{Claim}
\newreptheorem{corollary}{Corollary}
\newreptheorem{mainlemma}{Main Lemma}

\newcommand{\namedref}[2]{\mbox{\hyperref[#2]{#1~\ref*{#2}}}}

\newcommand{\figurerefb}[2]{\mbox{\hyperref[#1]{Figure~\ref*{#1}#2}}}

\newcommand{\equationref}[1]{\mbox{\hyperref[#1]{(\ref*{#1})}}}
\renewcommand{\eqref}{\equationref}

\numberwithin{equation}{section}

\title{\textbf{{\newalg} - Efficient Boundary-Respecting Convolutions}}
\date{\today}
\author{Linda Friso \thanks{joint first author} \thanks{Google Deepmind} \And
Annie Marsden$^*$ $^\dagger$ 
\And
Xinyi Chen $^\dagger$ \And
Arushi Gupta $^\dagger$ \And
Peter L. Bartlett $^\dagger$ \thanks{UC Berkeley} \And
Mark Braverman $^\dagger$ \thanks{Princeton University} \And
Elad Hazan $^\dagger$ $^\mathsection$
}
\begin{document}

\maketitle

\begin{abstract}
Convolutional architectures have emerged as powerful alternatives to Transformers for sequence modeling. The primary advantage is that they offer improved theoretical sequence length complexity by leveraging the Fast Fourier Transform (FFT). However, this theoretical improvement does not always meaningfully land in practice. One critical obstacle is that applying standard FFTs is not amenable to the large-scale training pipeline wherein data is packed from different sources into a single sequence for hardware efficiency. Indeed, standard FFT algorithms are not easily amenable to document packing. Existing workarounds suffer from severe inefficiencies, crippling the practical performance of convolutional architectures. We close this gap with 
{{\newalg}}, a novel algorithm for performing hardware-efficient, boundary-respecting convolutions on packed sequences. Extensive experiments show that {\newalg} achieves significant speedups over both attention and standard FFT-based baselines. This work makes the theoretical efficiency of long convolutional models a practical reality for large-scale, real-world data packing.
\end{abstract}

\section{Introduction}

The landscape of sequence modeling has been enriched by a new class of architectures based on long convolutions. Models like Hyena \cite{poli2023hyena}, M2-BERT \cite{fu2023monarch, saadfalcon2024m2bert}, FlashSTU \cite{liu2024flashstu}, and foundational State Space Models (SSMs) like S4 \cite{gu2022s4} achieve strong long-context reasoning abilities across domains from language to genomics. Their computational engine is the Fast Fourier Transform (FFT) convolution, which provides a favorable $O(N \log N)$ complexity \cite{cooley1965algorithm} compared to the $O(N^2)$ cost of attention in Transformers.

In practice, however, a major obstacle arises in large-scale LLM training pipelines: documents of highly variable length are routinely concatenated or ``packed'' together into fixed-size sequences to maximize GPU/TPU utilization. Current attention-based architectures handle packing via masking, and recent works like PackMamba \cite{packmamba2024} have adapted selective, scan-based SSMs \cite{gu2023mamba, lenz2025jamba, lahoti2026mamba3} to this setting by dynamically resetting hidden states. However, convolutional-based architectures cannot account for document packing in the same way. FFT-based algorithms to compute convolutions are hampered.
Continuous convolutions inherently suffer from boundary bleed-over between adjacent documents. Furthermore, FFT-based algorithms compound this by treating the sequence as periodic, introducing global wrap-around artifacts where the sequence's end corrupts its beginning.
Thus, despite their asymptotic advantage, standard FFT convolutions are fundamentally incompatible with packed sequences, creating a gap between theory and practice. Current solutions are inefficient compromises, such as processing documents iteratively (sacrificing parallelism) or padding all documents to a uniform length (wasting memory and computation).

Another solution is to ignore document boundaries completely. However, the resulting data corruption can degrade model quality by introducing a training-dependent artifact. Furthermore, in settings where documents come from different sources, this uncontrolled information leakage represents a critical privacy and security vulnerability.

This paper introduces \newalg, a novel and highly efficient algorithm for performing boundary-respecting convolutions on packed sequences. Our approach is inspired by Bailey's FFT algorithm \cite{bailey1990ffts}, which is naturally amenable to run quickly on modern hardware accelerators. However, Bailey's FFT cannot simultaneously compute multiple FFTs of various packed sequences. Instead, {\newalg{}}'s key innovation is to ensure modular computation across multiple documents simultaneously by changing the permutation mapping and computing different $k$-point DFTs using a block-diagonal matrix structure which depends on each document's sequence length. This preserves document boundaries while leveraging highly optimized GEMM kernels on GPUs and TPUs.

To provide a comprehensive analysis of the problem, we also develop and analyze two alternative methods that illustrate the trade-offs in different computational regimes. First, a \textbf{Full-Matrix} variant provides an exact $O(N^2)$ baseline that is effective for very short sequences. Second, a \textbf{Cooley-Tukey} variant uses arithmetic masking to achieve optimal $O(N \log N)$ asymptotic complexity, demonstrating a path to theoretical efficiency for extremely long sequences. We discuss these alternatives in Section~\ref{sec:alternatives}. While they offer valuable theoretical insight, our experiments show that \newalg, with its $\Theta(N^{3/2})$ complexity but superior hardware efficiency, is the fastest and most practical solution for typical sequence lengths found in large-scale model training. \newalg{} makes the theoretical promise of long convolutions a practical reality for real-world data packing, achieving significant speedups over both standard FFTs and attention.

\subsection{Related Work}


The need to process packed sequences of variable-length documents is a practical constraint in large-scale model training, designed to maximize hardware utilization. While packing optimizations exist for attention models and selective SSMs (e.g., state-resetting in PackMamba \cite{packmamba2024}), standard FFT-based convolutions remain fundamentally incompatible with this data format. Their circular nature, where the convolution is computed modulo the sequence length, causes ``wrap-around" artifacts that incorrectly mix information between independent documents packed into a single tensor. Our work directly addresses this gap between the theoretical efficiency of long convolutions and their practical deployment.

\paragraph{Approaches to Fast Convolution.}
Several algorithms have been developed to accelerate convolutions. The most direct method expresses the convolution as a multiplication with a Toeplitz matrix~\cite{gray2006toeplitz}, but this approach is impractical for long sequences due to its quadratic $O(N^2)$ complexity. The foundational algorithm for fast convolution is the Cooley-Tukey FFT, a divide-and-conquer method that achieves the well-known $O(N \log N)$ complexity by recursively breaking the transform into smaller computations known as "butterfly" operations. As we show in Section~\ref{sec:alternatives}, this algorithm can be adapted to the packed setting via arithmetic masking.

\paragraph{Matrix-Decomposition FFTs.}
A highly influential line of work accelerates FFTs by reformulating them as a series of dense matrix multiplications (GEMMs), making them well-suited for modern hardware. \textbf{Bailey's FFT}~\cite{bailey1989ffts} is a classic example, using a 4-step process that reshapes the input vector into a 2D matrix to improve performance on systems with hierarchical memory. More recently, \textbf{FlashFFTConv}~\cite{fu2023flashfftconv} maps this same underlying arithmetic to modern GPUs, fusing the operations in SRAM to avoid costly memory transfers. 

\paragraph{Our Contribution in Context.}
The primary limitation of prior matrix-decomposition methods, including both Bailey's FFT and FlashFFTConv, is that they are designed and optimized for a single, contiguous sequence. Applying them to packed data would require iterating through each document individually, which sacrifices parallelism and negates their performance advantages. \textbf{\newalg{}} takes a fundamentally different approach. Instead of iterating, it modifies the core Bailey's algorithm by introducing new left and right matrix operators that are aware of the packed structure. By using a block-diagonal matrix for the second DFT, it correctly computes the convolution for all documents in a single, parallel pass, thus solving the long-standing challenge of applying these efficient algorithms to packed sequences.

\section{\newalg: An Accelerated Boundary-Respecting Convolution}\label{sec:new_alg}

\newalg{} is a novel algorithm for performing boundary-respecting convolutions on packed sequences that is specifically designed for modern hardware accelerators. The method is inspired by Bailey's 4-step FFT, which computes an $L$-point DFT by reshaping a 1D vector of length $L =m k$ into an $m \times k$ matrix, applying a first DFT along the columns, element-wise multiplying by a ``twiddle factor'' matrix, and then applying a second DFT along the rows. {\newalg} (Algorithm~\ref{alg:rubiconv}, Figure~\ref{fig:newalg}) proceeds with a similar general structure but with several key differences:

\begin{itemize}[topsep=1.5pt, itemsep=6pt, parsep=0pt]
    \item First, {\newalg} appends $\min(L_i, L_F) - 1$ zeros to each document to enforce linear FFTs, where $L_i$ denotes the length of document $i$ and $L_F$ denotes the filter length. Then it pads the result to ensure its length is divisible by $k$, obtaining a length $L_i'$. Typically picking $k = 256$ provides an optimal tradeoff between padding amount (which is best if $k$ is small) and hardware efficiency (which favors matrix multiplications of specific sizes).
    \item Now that the greatest common divisor of the document lengths is at least $k$, {\newalg} reshapes the sequence into a matrix with $k$ rows so that no single column contains sequences corresponding to different documents. 
    \item The ``twiddle matrix'' is constructed differently than in Bailey's algorithm. In Bailey's, the $(a,b)$-th entry of the twiddle matrix is $\omega_L^{ab}$, where $\omega_L$ denotes the $L$-th root of unity. In {\newalg}, the twiddle matrix is a hybrid of several roots of unity that will ultimately be used to output several different bases for the DFT according to the sequence lengths.
    \item The second DFT adapts to sequences lengths of the documents being packed. The matrix has a block diagonal structure where block $i$ corresponds to the $L_i'/k$-point DFT.
\end{itemize}

\begin{figure}
    \centering
    \includegraphics[width=\linewidth]{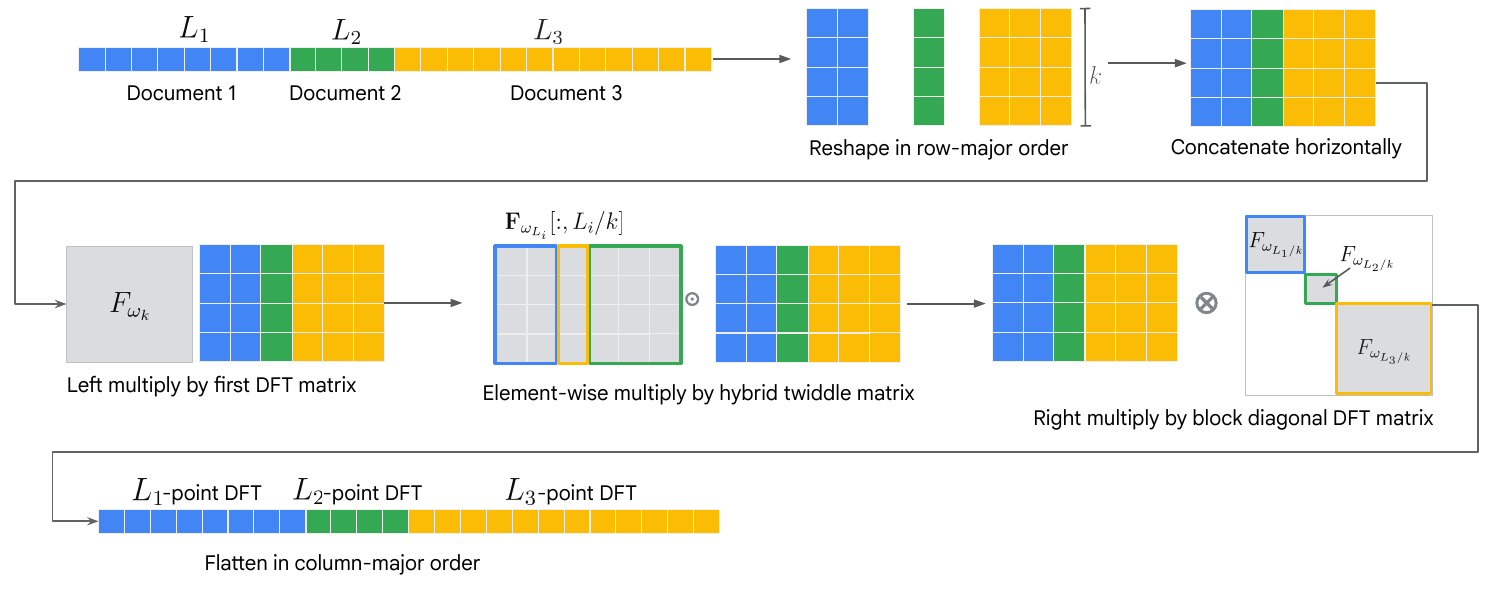}
    \caption{Visual representation of \newalg. }
    \label{fig:newalg}
\end{figure}

\begin{algorithm}[!ht]
\caption{\newalg: Boundary-Respecting Convolution}
\label{alg:rubiconv}
\begin{algorithmic}[1]
\Function{\newalg-Data-Processing}{\text{inputs}, \text{document\_lengths}, $k$}
\State For any integer $j$, let $\mat{F}_{j}$ be the matrix corresponding to the $j$-point DFT, where $$(\mat{F}_{j})_{\ell m} = \exp(2 \pi i \cdot \ell m / j).$$ 
    \State $[ \mat{x}^{(1)}, \dots, \mat{x}^{(n)}] \gets \text{inputs} $
    \State $[L_1, \dots, L_{n}] \gets \text{document\_lengths} $
    \For{$i = 1, \dots, n$}
    \State Append $\min(L_i, L_F) - 1$ zeros to $\mat{x}^{(i)}$ for causality. Then pad to nearest multiple of $k$.
    \State $L_i' \gets k \lceil (L_i + \min(L_i, L_F) - 1) / k \rceil$
    \State $m_i \gets L_i' / k$
    \State $\mat{T}^{(i)} \gets \mat{F}_{{L_i}}[:k, :m_i]$ \Comment{Extract twiddle factors}
    \State Construct and store first DFT matrix: $\mat{M}_1 \gets \mat{F}_{k}$
    \State Construct and store twiddle matrix: $\mat{T} \gets [ \mat{T}^{(1)}, \dots, \mat{T}^{(n)} ]$.
    \State Construct and store second DFT matrix: $$\mat{M}_2 \gets \texttt{BlockDiagonal}[\mat{F}_{{m_1}}, \dots, \mat{F}_{{m_n}}]$$
    \State $\mat{X}^{(i)} \gets $ reshaped $\mat{x}^{(i)}$ with shape $k \times m_i$ in row-major order
    \State $\mat{P}_1$ is the reshaping map, $$\mat{P}_1([ \mat{x}^{(1)}, \dots, \mat{x}^{(n)}])= \begin{bmatrix} \mat{X}^{(1)}  &  \mat{X}^{(2)} & \dots & \mat{X}^{(n)} \end{bmatrix}$$ 
    \State Construct output permutation $\mat{P}_2$ map such that for a $k \times (m_1 + \dots + m_{n})$ matrix $\mat{X} = \begin{bmatrix} \mat{X}_1 & \mat{X}_2 & \dots \mat{X}_{n} \end{bmatrix}$, $\left( \mat{P}_2 \mat{X} \right)= \begin{bmatrix}\text{flat}(\mat{X}_1), \dots, \text{flat}(\mat{X}_{n}) \end{bmatrix}$ where $\text{flat}(\mat{X})$ flattens the matrix in column-major order and slices to original document lengths.
    \EndFor
    \State \Return $\mat{T}$, $\mat{M}_1$, $\mat{M}_2$, $\mat{P}_1$, $\mat{P}_2$    \EndFunction
\Function{\newalg}{\text{inputs}, \text{document\_lengths}, $k$}
\State Reshape the data: $\mat{Y} \gets \mat{P}_1(\text{inputs})$
\State Apply the first DFT: $\mat{Y}' \gets \mat{M}_1 \mat{Y}$
\State Apply twiddle factors element-wise: $\mat{Y}'' \gets \mat{T} \odot \mat{Y}'$
\State Apply the second block-diagonal DFT matrix: $\mat{Y}''' \gets \mat{Y}'' \mat{M}_2$
\State $\text{out} \gets \mat{P}_2(\mat{Y}''')$
    \State \Return $\text{out}$
\EndFunction
\end{algorithmic}
\end{algorithm}

Following the inverse transform, truncating the output to the original valid length of each document removes this padding, ensuring a strictly causal receptive field without intra-document wrap-around artifacts. Theorem~\ref{thm:rubiconv_guarantee} guarantees the correctness of \newalg{} and provides a computational complexity analysis. The proof of Theorem~\ref{thm:rubiconv_guarantee} is in Appendix~\ref{append:rubiconv_proof}. To make Algorithm \ref{alg:rubiconv} performant on a TPU we included several important heuristics and hardware-aware modifications that are detailed in Appendix \ref{sec:rubi-optimizations}.

\ignore{
\begin{theorem}
\label{thm:rubiconv_guarantee}
    The {\newalg} Algorithm as described in Alg.~\ref{alg:rubiconv} when applied to a packed sequence $[\mat{x}^{(1)}, \mat{x}^{(2)}, \dots, \mat{x}^{(n)}]$ outputs the packed sequence $[\overline{\mat{x}}^{(1)}, \overline{\mat{x}}^{(2)}, \dots,\overline{\mat{x}}^{(n)}]$ such that for each $i = 1, \dots, n$, $\overline{\mat{x}}^{(i)}$ is the $L'_i$-point DFT of $\mat{x}^{(i)}$ where $L'_i = k \times \lceil L_i/k \rceil$. Moreover, {\newalg}-data preprocessing has a computational complexity and memory footprint of $O\left( L_{\text{total}} + kn + (L_{\text{total}}/k)^2 + n^2 + L_{\text{total}} n/k  \right)$ and {\newalg} has computational complexity $O(\left(k L_{\text{total}} + L_{\text{total}}^2/k) + k^2 n + n^2 k + L_{\text{total}} n \right)$. 
\end{theorem}
}

\begin{theorem}
\label{thm:rubiconv_guarantee}
    The {\newalg} Algorithm as described in Alg.~\ref{alg:rubiconv}, when applied
    to a packed sequence $[\mat{x}^{(1)}, \mat{x}^{(2)}, \dots, \mat{x}^{(n)}]$
    with original document lengths $L_1, \dots, L_n$ and filter length $L_F$,
    outputs the packed sequence
    $[\overline{\mat{x}}^{(1)}, \overline{\mat{x}}^{(2)}, \dots, \overline{\mat{x}}^{(n)}]$
    such that for each $i = 1, \dots, n$, $\overline{\mat{x}}^{(i)}$ is the
    $L_i'$-point DFT of $\mat{x}^{(i)}$, where
    $L_i' = k \lceil (L_i + \min(L_i, L_F) - 1) / k \rceil$.
    Moreover, {\newalg}-data preprocessing has a computational complexity and
    memory footprint of
    $O\left( L_{\text{total}} + kn + (L_{\text{total}}/k)^2 + n^2
    + L_{\text{total}} n/k \right)$
    and {\newalg} has computational complexity
    $O\left(k L_{\text{total}} + L_{\text{total}}^2/k + k^2 n + n^2 k
    + L_{\text{total}} n \right)$.
\end{theorem}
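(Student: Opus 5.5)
The plan is to reduce correctness to the classical single-document Bailey (four-step) identity, applied block by block, and then to count costs for each constructed object and each matrix product.

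\textbf{Step 1: block decoupling.} Write $\mat{Y}=\mat{P}_1(\text{inputs})=[\mat{X}^{(1)},\dots,\mat{X}^{(n)}]$, where $\mat{X}^{(i)}$ is $k\times m_i$ and holds only the padded entries of document $i$. Every column belongs to exactly one document, since each $L_i'$ is a multiple of $k$.
\begin{itemize}
\item Left multiplication by $\mat{M}_1=\mat{F}_k$ acts column by column, so $\mat{M}_1\mat{Y}=[\mat{F}_k\mat{X}^{(1)},\dots,\mat{F}_k\mat{X}^{(n)}]$.
\item The Hadamard product with $\mat{T}=[\mat{T}^{(1)},\dots,\mat{T}^{(n)}]$ is entrywise, so it respects the same column partition.
\item Right multiplication by the block-diagonal $\mat{M}_2$ sends block $i$ to block $i$ via $\mat{F}_{m_i}$.
\item $\mat{P}_2$ flattens each block separately.
\end{itemize}
Hence the output for document $i$ is $\text{flat}\big((\mat{T}^{(i)}\odot \mat{F}_k\mat{X}^{(i)})\mat{F}_{m_i}\big)$. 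No entry of document $j\neq i$ ever enters it, which already gives the boundary-respecting property.

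\textbf{Step 2: single-block Bailey identity.} Fix $i$, write $L=L_i'=km$ with $m=m_i$, and let $x$ be the padded vector. Index $n=am+b$, with $a\in[k]$ the row and $b\in[m]$ the column, matching the row-major reshape. Index frequencies as $f=c+kd$, with $c\in[k]$ and $d\in[m]$. Then
\[
\omega_L^{(am+b)(c+kd)}=\omega_k^{ac}\,\omega_L^{bc}\,\omega_m^{bd},
\]
using $\omega_L^{amkd}=1$, $\omega_L^{am c}=\omega_k^{ac}$ and $\omega_L^{bkd}=\omega_m^{bd}$. Summing over $a$ gives $(\mat{F}_k\mat{X})_{cb}$. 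Multiplying by the twiddle factor $\omega_L^{cb}$, which is entry $(c,b)$ of $\mat{F}_L[:k,:m]$, and then summing over $b$ against $\omega_m^{bd}$ gives entry $(c,d)$ of the result. So $\hat x_{c+kd}$ sits at position $(c,d)$, and column-major flattening lists it in the order $f=c+kd$, which is exactly the DFT order.

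One point must be checked here. The twiddle in the algorithm is written $\mat{F}_{L_i}$ rather than $\mat{F}_{L_i'}$. For the claimed $L_i'$-point DFT it must be read as $\mat{F}_{L_i'}$, i.e.\ with $\omega_{L_i'}$. I will state this explicitly as the intended reading, since it is the one consistent with the theorem.

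\textbf{Step 3: complexity.}
\begin{itemize}
\item \emph{Preprocessing.} Padding and building $\mat{P}_1,\mat{P}_2$ cost $O(L_{\text{total}}+kn)$, since padding adds fewer than $k+L_i$ entries per document. $\mat{M}_1$ costs $O(k^2)$, absorbed under the harmless assumption $k\le L_{\text{total}}$. $\mat{T}$ is $k\times\sum m_i$ with $\sum_i m_i\le 2L_{\text{total}}/k+n$, so it costs $O(L_{\text{total}}+kn)$. Storing $\mat{M}_2$ densely costs $O\big((\sum_i m_i)^2\big)=O\big((L_{\text{total}}/k)^2+n^2+L_{\text{total}}n/k\big)$.
\item \emph{Main routine.} $\mat{M}_1\mat{Y}$ costs $O(k^2\sum_i m_i)=O(kL_{\text{total}}+k^2n)$. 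The Hadamard product is lower order. $\mat{Y}''\mat{M}_2$ as a dense product costs $O\big(k(\sum_i m_i)^2\big)=O\big(L_{\text{total}}^2/k+n^2k+L_{\text{total}}n\big)$.
\end{itemize}
Adding these gives both bounds in Theorem~\ref{thm:rubiconv_guarantee}.

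\textbf{Main obstacle.} The delicate part is Step 2's index bookkeeping: row-major input reshape versus column-major output flatten, the sign and normalization convention $\exp(+2\pi i\ell m/j)$, and the $L_i$ versus $L_i'$ twiddle root. I would fix the index conventions first and verify the exponent factorization for one generic $(a,b,c,d)$ before writing the rest.
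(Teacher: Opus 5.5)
Your argument follows the paper's proof closely: the same column-block decoupling through $\mat{M}_1$, $\mat{T}$, the block-diagonal $\mat{M}_2$ and $\mat{P}_2$, the same reduction to single-document Bailey, and the same cost accounting via $\sum_i m_i \le 2L_{\text{total}}/k+n$. The paper simply invokes Bailey's four-step algorithm. You instead verify the factorization $\omega_L^{(am+b)(c+kd)}=\omega_k^{ac}\,\omega_L^{bc}\,\omega_m^{bd}$ and the column-major output ordering directly, which makes your version self-contained. Your reading of the twiddle root as $\omega_{L_i'}$ is exactly what the paper's proof uses: it defines $\mat{T}^{(i)}$ with entries $\omega_{L_i'}^{ab}$. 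For the same reason, $\mat{P}_2$ has to be read without its final ``slice to original document lengths''; otherwise the output would be only the first $L_i$ coefficients of the $L_i'$-point DFT. Both you and the paper make this reading implicitly.

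One step is wrong as written. The $O(k^2)$ cost of building $\mat{M}_1=\mat{F}_k$ is not absorbed into the stated preprocessing bound under the assumption $k\le L_{\text{total}}$. Take $n=1$ and $k=L_{\text{total}}$: the bound $O(L_{\text{total}}+kn+(L_{\text{total}}/k)^2+n^2+L_{\text{total}}n/k)$ is then $O(L_{\text{total}})$, while $k^2=L_{\text{total}}^2$.

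Absorption actually requires $k^2=O(L_{\text{total}}+kn)$. This holds, for example, when $k=O(\sqrt{L_{\text{total}}})$ as in Corollary~\ref{cor:like_bailey}, or when $k$ is a fixed constant such as $256$.

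Alternatively, observe that $\mat{F}_k$ does not depend on the document lengths, so you can count it as a one-time, batch-independent cost. This is what the paper does implicitly: its per-batch accounting lists only $\mat{T}$, $\mat{M}_2$, $\mat{P}_1$, $\mat{P}_2$ and omits $\mat{M}_1$ altogether. Your main-routine bound is unaffected.
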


Corollary~\ref{cor:like_bailey} shows that, for a certain regime, \newalg{}'s complexity matches Bailey's algorithm as if no documents were packed together. 
\begin{corollary}[Corollary to Theorem~\ref{thm:rubiconv_guarantee}]
\label{cor:like_bailey}
If the number of packed documents satisfies $n = \Theta \left(  L_{\text{total}}^{1/2} \right)$ and we pick $k = \Theta(L_{\text{total}}^{1/2})$ the preprocessing step has complexity $O(L_{\text{total}})$ and {\newalg} has an overall complexity of $O(L_{\text{total}}^{3/2})$.
\end{corollary}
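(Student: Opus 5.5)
The plan is direct substitution into the two complexity bounds of Theorem~\ref{thm:rubiconv_guarantee}: set $k = \Theta(L_{\text{total}}^{1/2})$ and $n = \Theta(L_{\text{total}}^{1/2})$ and check term by term that each summand is dominated by the claimed bound. No new correctness argument is needed, since correctness is already part of Theorem~\ref{thm:rubiconv_guarantee}. Write $L = L_{\text{total}}$ and fix constants with $c_1 L^{1/2} \le k, n \le c_2 L^{1/2}$.

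\textbf{Preprocessing.} The bound is $O(L + kn + (L/k)^2 + n^2 + Ln/k)$. Under the assumptions:
\begin{itemize}
\item $kn = \Theta(L)$ and $n^2 = \Theta(L)$.
\item $(L/k)^2 = \Theta((L^{1/2})^2) = \Theta(L)$.
\item $Ln/k = L \cdot \Theta(1) = \Theta(L)$.
\end{itemize}
Every term is therefore $O(L)$, which gives $O(L_{\text{total}})$.

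\textbf{Main algorithm.} The bound is $O(kL + L^2/k + k^2 n + n^2 k + Ln)$. Here $kL$, $L^2/k$, $k^2 n$, $n^2 k$ and $Ln$ are each $\Theta(L^{3/2})$. This gives $O(L_{\text{total}}^{3/2})$, matching the cost of Bailey's algorithm with $k = m = L^{1/2}$ on a single unpacked sequence.

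\textbf{Points to check.} There is essentially no obstacle, but two details deserve a line each.
\begin{itemize}
\item \emph{Consistency of the regime.} Because each $L_i' \ge k$, padding can enlarge the total length. I would note that the complexity statement of Theorem~\ref{thm:rubiconv_guarantee} is written in terms of $L_{\text{total}}$. If $L_{\text{total}}$ is read as $\sum_i L_i'$, then $\sum_i L_i' \le \sum_i (2L_i + k) \le 2L + nk = O(L)$ under the assumptions. Either reading therefore gives the same asymptotics.
\item \emph{Hidden constants.} The $\Theta$ assumptions supply both upper and lower bounds, and the lower bound on $k$ is what controls the $L^2/k$ and $Ln/k$ terms.
\end{itemize}
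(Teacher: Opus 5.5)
Your proposal is correct and takes the same route as the paper, which also just substitutes $k = \Theta(L_{\text{total}}^{1/2})$ into the bounds of Theorem~\ref{thm:rubiconv_guarantee}. This gives $O(L_{\text{total}} + n^2 + L_{\text{total}}^{1/2} n)$ for preprocessing and $\Theta(L_{\text{total}}^{3/2} + nL_{\text{total}} + n^2 L_{\text{total}}^{1/2})$ for the main algorithm, and then $n = O(L_{\text{total}}^{1/2})$ finishes both bounds. You are right that the lower bound on $k$ is what controls the $(L_{\text{total}}/k)^2$ and $L_{\text{total}} n/k$ terms (the paper's sketch only writes $k = O(\sqrt{L_{\text{total}}})$ at that step), and only the upper bound on $n$ is actually needed.
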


\ignore{

\section{\newalg{} - our accelerated boundary-preserving method}

We use the following notation: for any integer $j$, let $\mat{F}_{j}$ to be the matrix corresponding to the $j$-point DFT so that $(\mat{F}_{k})_{\ell m} = \left( e^{\frac{2 \pi i}{j}} \right)^{\ell m}$.

\begin{algorithm}[H]
\caption{\ctversion: Boundary-Respecting Bailey's}
\begin{algorithmic}[1]
\Function{Sentry\_Bailey}{\texttt{input\_array}, \texttt{document\_lengths}, $k$}
    \State Assume \texttt{input\_array} $ = [ \textrm{doc}_0, \textrm{doc}_1, \dots, \textrm{doc}_n]$ and \texttt{document\_lengths} = $[L_0, L_1, \dots, L_n]$.
    \For{$i=1, \dots, n$}
    \State $r \gets L_i \mod k$
    \State Pad $\textrm{doc}_i$ by $r$
    \State Reset $L_i$ to be the new length of $\textrm{doc}_i$
    \State $m_i \gets m_i$ \Comment{$L_i$ will be a multiple of $k$ so $m_i$ is an integer}
    \State Set $\textrm{doc\_reshaped}_i$ to be $\textrm{doc}_i$ reshaped to have $k$ rows $$ \textrm{doc\_reshaped}_i = \begin{bmatrix} (\textrm{doc}_i)_1 &  (\textrm{doc}_i)_2 &  \dots  & (\textrm{doc}_i)_{m_i}  \\
    (\textrm{doc}_i)_{m_i + 1} &  (\textrm{doc}_i)_{m_i + 2} &  \dots  & (\textrm{doc}_i)_{2m_i} \\
    \vdots & \vdots & & \vdots
    \end{bmatrix} $$
    \State $\mat{T}_i \gets \mat{F}_{{L_i}}[:k, :m_i]$ 
    \EndFor
    \State Set $L \gets \sum_{i=1}^n L_i$
    \State Set $\mat{X} \gets [ \textrm{doc\_reshaped}_0, \dots \textrm{doc\_reshaped}_n]$ 
    \State Set $\mat{T} \gets \begin{bmatrix} \mat{T}_0 & \mat{T}_1 & \dots & \mat{T}_n \end{bmatrix}$ 
    \State Set $\mat{M}_1 \gets \mat{F}_{k}$ 
    \State Set $\mat{M}_2 \gets \texttt{BlockDiagonal}[\mat{F}_{{m_1}}, \mat{F}_{{m_2}}, \dots, \mat{F}_{{m_n}}]$ 
    \State Apply the first DFT by left matrix multiplication: $\mat{X}' \gets \mat{M}_1 \mat{X}$ 
    \State Apply the twiddle matrix element-wise: $\mat{X}'' \gets \mat{T} \odot \mat{X}'$ 
    \State Apply the second DFT by right matrix multiplication: $\mat{X}''' \gets \mat{X}'' \mat{M}_2$ 
    \State $\begin{bmatrix}
        \mat{X}'''_{0} &  \mat{X}'''_1 & \dots & \mat{X}'''_{n}
    \end{bmatrix} \gets \mat{X}'''$ \Comment{$\mat{X}'''_i$ corresponds to document $i$}
    \For{$i=1, \dots, n$}
    \State $\mat{Y}_i \gets$ the $L_i$ length vector s.t. $\left(\mat{Y}_i\right)_{m_i*\ell + m} = \left( \mat{X}''' \right)_{m \ell}$ for $\ell \in [k-1]$, $m \in [m_i]$
    \EndFor
    \State Output $\mat{Y} = \begin{bmatrix}
        \mat{Y}_0 & \mat{Y}_1 & \dots & \mat{Y}_n
    \end{bmatrix}$
\EndFunction
\end{algorithmic}
\end{algorithm}

\paragraph{Runtime complexity of Bailey's algorithm (dense GEMM version).}
Assume standard dense matrix multiplication with cubic cost. Let $N = n_1 n_2$ and reshape the input into an $n_1 \times n_2$ matrix $X$. The DFT can be written as
\[
Y \;=\; F_{n_1}\, X \, F_{n_2}^\top,
\]
i.e., two dense matrix multiplies with Fourier matrices.

\begin{itemize}
  \item Left multiply: $F_{n_1} \in \mathbb{C}^{n_1\times n_1}$ by $X \in \mathbb{C}^{n_1\times n_2}$ costs
  \[
  \Theta\!\big(n_1^2 n_2\big).
  \]
  \item Right multiply: $(F_{n_1} X) \in \mathbb{C}^{n_1\times n_2}$ by $F_{n_2}^\top \in \mathbb{C}^{n_2\times n_2}$ costs
  \[
  \Theta\!\big(n_1 n_2^2\big).
  \]
\end{itemize}

Hence one DFT costs
\[
T_{\mathrm{DFT}}(n_1,n_2)
= \Theta\!\big(n_1^2 n_2 + n_1 n_2^2\big)
= \Theta\!\big(N (n_1 + n_2)\big).
\]
This is minimized for $n_1 \approx n_2 \approx \sqrt{N}$, yielding
\[
T_{\mathrm{DFT}} = \Theta\!\big(N^{3/2}\big).
\]

For linear convolution of length $N$, using zero-padding to length $M \ge 2N-1$, we perform a forward DFT of size $M$, pointwise multiplication, and an inverse DFT:
\[
T_{\mathrm{conv}}(M)
= 2\,T_{\mathrm{DFT}}(M) + \Theta(M)
= \Theta\!\big(M^{3/2}\big).
\]
Thus, under standard GEMM ($n^3$) cost, Bailey’s non-recursive (dense) implementation gives $\Theta(N^{3/2})$ per transform and $\Theta(M^{3/2})$ for the padded convolution.

}

\section{Alternative Boundary-Preserving Formulations} \label{sec:alternatives}

While \newalg~represents our primary contribution and the most practical solution for modern hardware, the core principle of boundary preservation can be extended to other convolution algorithms. In this section, we present two alternative formulations to provide a comprehensive analysis of the problem. While these methods are not empirically competitive with \newalg~for typical large-scale deep learning workloads on accelerators, they map the bounds of the solution space and illustrate the critical trade-offs between asymptotic complexity and hardware utilization.

\subsection{An Asymptotically Optimal Variant: \ctversion}

It is theoretically possible to achieve the optimal asymptotic complexity of $O(N \log N)$ by adapting the classic iterative Cooley-Tukey FFT algorithm. Our variant, \ctversion, achieves this not by altering the FFT's overarching structure, but by \textbf{arithmetically masking} butterfly operations iteratively to prevent data from crossing document boundaries.

The core idea is to proceed through the standard Cooley-Tukey stages, where each stage handles a particular DFT size, and generate twiddle factors based on each element's parent document length. At stage $i$, the DFT size is $m=2^i$, and if a document is too short to participate in the current DFT size (i.e., its length is less than $m$), its twiddle factors are set to identity values. This effectively turns the butterfly update into a no-op for that document, preserving its state from the last valid stage. 

At a high level, the algorithm opens up the Cooley-Tukey stages and enforces document boundaries by masking in each stage, thus maintaining the asymptotic runtime of the FFT. The algorithm (detailed in Alg. \ref{alg:masked_fft}) begins with a per-document bit-reversal and proceeds with these masked butterfly stages across the packed tensor. We include a visualization of the algorithm in Appendix~\ref{sec:app_alternatives}.

\begin{algorithm}[H]
\caption{\ctversion: Boundary-Respecting Cooley-Tukey}
\label{alg:masked_fft}
\begin{algorithmic}[1]
\Function{\text{\ctversion}}{\texttt{input\_array}, \texttt{pattern}}
    \State \Comment{Assume inputs are padded such that each document length is a power of 2}
    \State Let \texttt{y} be a copy of \texttt{input\_array}.
    \ForAll{document\_block in \texttt{pattern}}
        \State Reorder elements within \texttt{y} for this block into bit-reversed order.
    \EndFor
    \State $\textit{max\_log2\_len} \gets \text{maximum log2-size from } \texttt{pattern}$
    \For{$s \gets 1$ to $\textit{max\_log2\_len}$}
        \State $m \gets 2^s$  
        \ForAll{element at index $i$ in \texttt{y}}
            \State $\textit{doc\_len} \gets \text{length of document containing element } i$
            \If{$\textit{doc\_len} \ge m$}
                \State $(\texttt{Tw0}[i], \texttt{TwF}[i], \texttt{TwB}[i]) \gets \text{CalculateStandardFFTFactors}(i, s)$
            \Else
                \State $(\texttt{Tw0}[i], \texttt{TwF}[i], \texttt{TwB}[i]) \gets (1, 0, 0)$  
            \EndIf
        \EndFor
        \State $y \gets (y \odot \text{Tw0}) + (\text{roll}(y, m/2) \odot \text{TwF}) + (\text{roll}(y, -m/2) \odot \text{TwB})$
    \EndFor
    \State \Return{\texttt{y}}
\EndFunction
\end{algorithmic}
\end{algorithm}

\paragraph{Correctness, Complexity, and Hardware Limitations}
This method correctly computes the concatenation of independent, per-document FFTs. An inductive proof shows that at every stage $s$, the values within each document's slice are identical to those from a standard Cooley-Tukey FFT run on that document. However, because an active node will still read across a boundary even if the write is masked, preventing information mixing requires padding each document to the nearest power of two. This alignment ensures cross-document butterfly pairings never occur. While preserving the $O(N \log N)$ complexity for the padded sequence, it introduces severe memory bloat.

Crucially, while \ctversion~is theoretically optimal, it is fundamentally misaligned with modern accelerator architectures like TPUs and GPUs. These devices achieve peak, high-bandwidth performance through large, dense matrix multiplications (GEMMs). The iterative, memory-bound butterfly updates required by the CT algorithm result in poor hardware utilization. This architectural bottleneck, compounded by the power-of-two padding requirement, renders the $O(N \log N)$ approach impractical for large-scale model training.

\subsection{A Direct Quadratic-Time Variant via Block-Diagonal Matrices}

The most straightforward way to ensure boundary-respecting convolution is to represent the convolution operator as an explicit \textbf{block-diagonal matrix}. For a packed vector $x = [x^{(1)} \,\|\, \cdots \,\|\, x^{(B)}]$, we construct a large matrix $A = \mathrm{diag}(A_1, \ldots, A_B)$, where each block $A_b$ is the matrix operator for the convolution on document $b$. Depending on the desired padding, $A_b$ can be a circulant matrix (for circular convolution) or a Toeplitz matrix (for linear convolution).

The convolution is then a single matrix-vector product, $y = Ax$. By its block-diagonal construction, the operator mathematically guarantees that it cannot mix information between documents, as each block $A_b$ only acts on the corresponding input segment $x^{(b)}$.

\paragraph{Complexity.}
Forming and applying this matrix costs $\Theta(\sum_b L_b^2)$, which is upper-bounded by $\Theta(N^2)$. Although asymptotically inefficient compared to FFT-based methods, this approach serves as a simple, exact, and brute-force baseline. It preserves documents boundaries and maps cleanly to GEMM kernels, making it highly competitive for processing extremely short sequences where the overhead of FFTs is unjustified, or for validating the numerical outputs of more complex algorithms.

\subsection{Synthesizing the Solution Space}
The development of these alternatives clearly bounds the design space for packed sequence convolutions. The Full-Matrix approach ($O(N^2)$) maps perfectly to hardware GEMMs but is too computationally expensive for long contexts. Conversely, the \ctversion~approach ($O(N \log N)$) is algorithmically optimal but hostile to modern accelerator memory hierarchies. Therefore, \newalg---with its $\Theta(N^{3/2})$ complexity---emerges as the most pragmatic solution. It is the only method that successfully marries theoretical boundary preservation with the dense GEMM operations that modern AI accelerators demand.

\section{Experimental Evaluation} \label{sec:experimental_evaluation}
\subsection{Runtime and Efficiency} \label{sec:runtime}
In this section we analyze the efficiency of \newalg{} by comparing it to other convolution baselines as well as splash attention.
\subsubsection{Experimental Setup}
\paragraph{Baselines}
We analyze the scaling behavior of \newalg{} execution time with respect to sequence length, model dimension, and convolution filter size, comparing it against the following baselines:
\begin{enumerate}
    \item \textbf{\texttt{jnp.convolve}.} This baseline processes a packed sequence by iterating through each document. For a given document, it masks all tokens belonging to other documents and applies a causal convolution using the JAX NumPy function \texttt{jnp.convolve}. This approach correctly handles document boundaries but incurs the cost of performing a sequential loop over the documents in the packed sequence.
    \item \textbf{\texttt{jax.lax.conv}.} This method is identical to the \texttt{jnp.convolve} baseline but substitutes the convolution operation with JAX's optimized \texttt{jax.lax.conv\_general\_dilated} function.
    \item \textbf{Splash Attention.} As a hardware performance baseline, we compare against Splash Attention, a highly optimized Pallas software kernel for TPUs (OpenXLA Contributors, 2024) that maintains standard attention computations within on-chip SRAM. Although architecturally distinct from a convolution, this comparison contextualizes the runtime of \newalg{} against one of the fastest and most widely adopted layers in large-scale language models\footnote{If the ultimate goal of \newalg{} is to unlock convolutional architecture blocks that truly serve as an alternative to the transformer's global attention block, then we would like to ensure its efficiency is comparable or better.}.
\end{enumerate}
\paragraph{Benchmarking protocol}
Unless otherwise specified, all runtime measurements for convolution algorithms evaluate a single convolution operation within a single network layer, mirroring a standard language model training setup. The input is a tensor $\mathbf{X} \in \mathbb{R}^{ L \times D}$ (where $L$ is the sequence length and $D$ is the model's dimension), and the convolution filter is represented as $\mathbf{F} \in \mathbb{R}^{L_F \times D}$ (where $L_F$ is the filter length). The convolution is applied independently to each of the $D$ feature channels, acting as a depthwise convolution.
All runtime scaling experiments are conducted on a system with $4$ TPU v7 devices. \newalg{} and the baselines are sharded along the sequence dimension. When \newalg{} reshapes the sequence into a 2D matrix (mapping sequence length to rows and columns), the sequence-axis sharding translates to column-axis sharding. Consequently, the algorithm's subsequent row-wise matrix multiplications execute locally, while the column-wise multiplications are automatically sharded across TPUs.
To obtain stable and reliable measurements, we report the mean runtime over $5$ trials, preceded by $5$ untimed warm-up iterations, and 95\% confidence intervals. For each configuration of batch size, sequence length, and model dimension, identical input data was used across all algorithms to ensure a fair comparison.
\paragraph{Data}
To ensure our scaling experiments are representative of real-world scenarios, we established an empirical document length distribution. We derived this distribution by tokenizing a 10B-token sample of the 1.3T-token FineWeb-Edu dataset~\cite{penedo2024fineweb} using the Gemma 3~\cite{gemmateam2025gemma3technicalreport} tokenizer. While the token values in our experimental dataset are synthetic, the document boundaries are determined by sampling document lengths from this empirical distribution.

\subsubsection{Results}

Algorithm~\ref{alg:rubiconv} decomposes \newalg{} into data pre-processing and an FFT component. As detailed in Appendix~\ref{sec:rubi-optimizations}, pre-processing--- such as computing document-based permutations and DFT matrices--- is dependent only on the current segment ids. In language models, this overhead amortizes efficiently across multiple convolutional layers. Hence we report two benchmarks: \newalg{} (Total) for the full pipeline, and \newalg{} (Conv Only) for the isolated FFT execution.

\paragraph{Sequence length scaling}
We measure the scaling of different convolution algorithms with respect to sequence length up to $2^{18}$, keeping the model dimension constant and the convolution filter size equal to the sequence length. \newalg{} performance is measured with documents padded to a multiple of $k=256$. Figure~\ref{fig:all_sequence_scaling} shows that \newalg{} exhibits superior asymptotic scaling compared to iterative convolution baselines. While brute-force methods are competitive for extremely short sequences, \newalg{} quickly overtakes them, consistently outperforming all competing convolutional methods across standard context lengths. On the single dimensional example we see that for context lengths exceeding $2^{16}$, \newalg{} is faster than Splash Attention, empirically demonstrating the feasibility of our method in practical, large-scale training regimes. This advantage doesn't appear with higher model dimension (1024), however for long enough sequences (sequence length 260k) \newalg{} is competitive with splash attention.

\begin{figure}[!ht]
\centering
\begin{subfigure}[b]{0.43\textwidth}
    \centering
 \includegraphics[width=\linewidth]{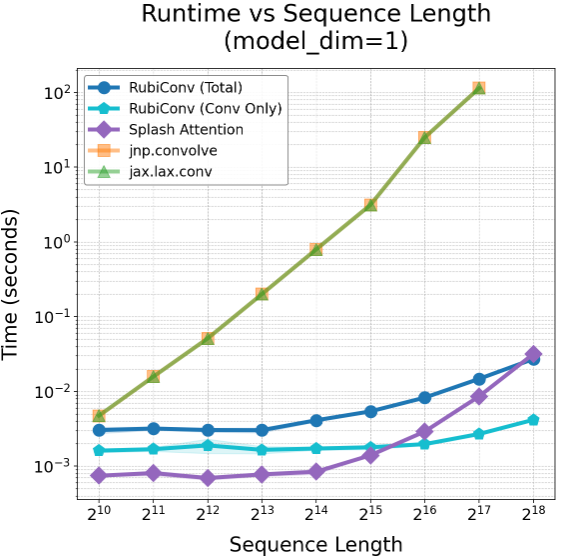} 
    \caption{Scaling with sequence length ($L$), with filter size $L_F = L$ and model dimension $D = 1$.}
    \label{fig:seq_len_scaling_hdim1}
\end{subfigure}
\hfill
\begin{subfigure}[b]{0.43\textwidth}
    \centering
 \includegraphics[width=\linewidth]{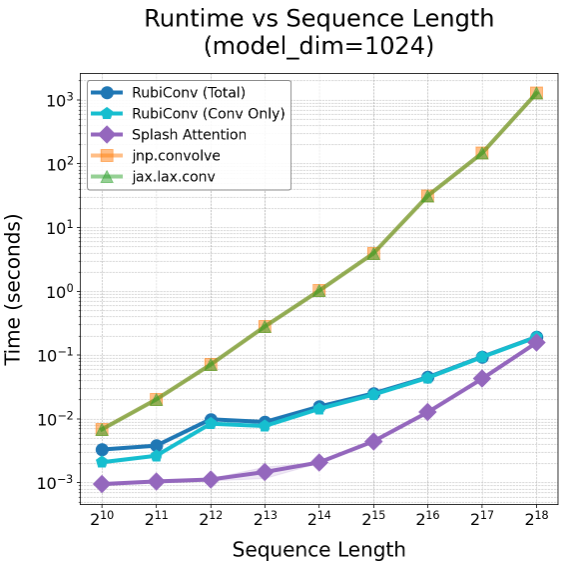} 
    \caption{Scaling with sequence length ($L$), with filter size $L_F = L$, model dimension $D = 1024$.}
    \label{fig:seq_len_scaling_hdim1024}
\end{subfigure}
\caption{Scaling of different convolution algorithms with respect to sequence length.}
\label{fig:all_sequence_scaling}
\end{figure}

\paragraph{Model dimension scaling}
We measure the scaling of different convolution algorithms with respect to the model dimension $D$, keeping the sequence length and convolution filter size equal to $2^{14}$. \newalg{} performance is measured with documents padded to a multiple of $k=256$, and Splash Attention performance is measured with the head dimension equal to $\min(256, D)$. Figure~\ref{fig:hidden_dim_scaling} illustrates that \newalg{} scales efficiently with the model dimension, maintaining a performance advantage over the convolution baselines.

\paragraph{Convolution filter size scaling}
We measure the scaling of different convolution algorithms with respect to the convolution filter size, keeping the sequence length equal to $2^{14}$ and the model dimension equal to $1024$. \newalg{} performance is measured with documents padded to a multiple of $k=256$. Figure~\ref{fig:conv_filter_scaling} shows that, unlike other convolutional baselines, \newalg{} runtime remains computationally tractable with long filter sizes, making it suitable for extremely long convolutions.

\begin{figure}[h!]
\centering
\begin{subfigure}[b]{0.43\textwidth}
    \centering
 \includegraphics[width=\linewidth]{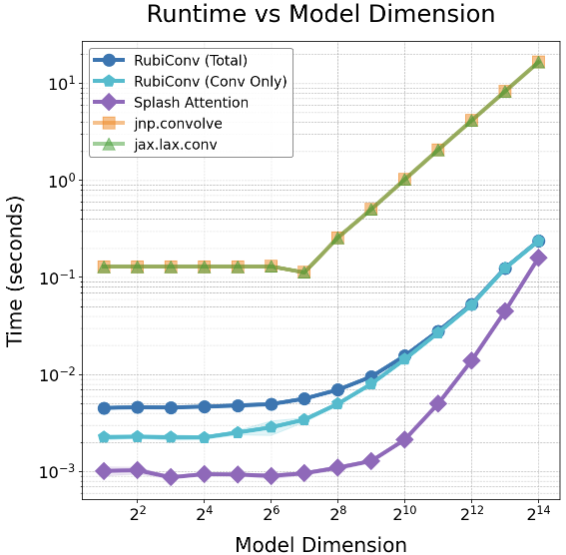} 
    \caption{Scaling with model dimension ($D$) with sequence length $L = 2^{14}$, filter size $L_F = 2^{14}$.}
    \label{fig:hidden_dim_scaling}
\end{subfigure}
\hfill
\begin{subfigure}[b]{0.43\textwidth}
    \centering
 \includegraphics[width=\linewidth]{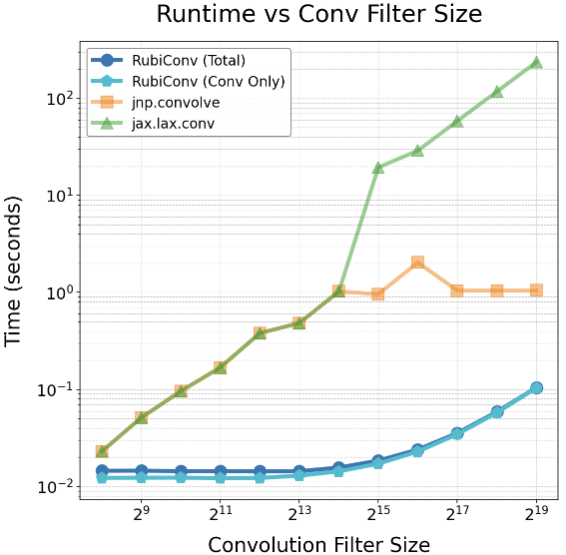} 
    \caption{Scaling with filter size ($F$) with sequence length $L = 2^{14}$, model dimension $D = 1024$.}
    \label{fig:conv_filter_scaling}
\end{subfigure}
\caption{Scaling of different algorithms with respect to convolution filter size and model dimension.}
\label{fig:all_scaling}
\end{figure}

\ignore{
\section*{Experiments we want to have}
\begin{enumerate}
    \item 
Comparison of the different convolution algorithms as a function of the L seq length, w. odel dim ablation
broadly speaking - also to jnp.convolve, full matrix, bla bal
Ablation dimensions:
\begin{itemize}
    \item batch size = 1
    \item sequence length = [1024, 2048, ..., 524288] depending on the algorithm. 
    \item model dimension = [512, 1024, 2048, 4096]
    \item Separately time components that are shared across layers and components that scale with the number of layers
\end{itemize}

\item
comparison to attention.
fastest combination and compare to splash attention

\item 
{quality comparison of covolutional model with and without respecting boundaries}

\item
{ ablate full models , number of layers, for speed }

\item 
{Calculation of theoretical flops vs. actual performance for the different algorithms including attention}

\item 
{histogram of seq length for different datasets, including calculation of which alg is most suitable given the dist.}

\end{enumerate}

\subsection{Model Quality and Data Integrity Evaluation}
\label{sec:quality}

Beyond computational performance, a critical evaluation is the impact of boundary preservation on model quality. The data corruption caused by wrap-around errors in standard FFT convolutions is not merely theoretical; it can lead to measurable degradation in a model's downstream performance by exposing it to invalid sequences during training.

To quantify this effect, we compare two versions of the FlashSTU model, a state-of-the-art long-convolutional architecture, trained on the same packed dataset. The {baseline model} uses a standard (non-boundary-respecting) FFT-based convolution, allowing for data corruption between packed documents. The \newalg\ model uses our proposed boundary-respecting algorithm, ensuring complete data isolation. We evaluate both models on a held-out test set, measuring perplexity on a language modeling task.

\linda{Figure~\ref{fig:rubiconv_minmax800} shows eval perplexity for MinMax800M FlashSTU models comparing RubiConv based convolution with non-boundary respecting convolution. RubiConv exhibits a perplexity improvement of ~2\% on both google3 and gdocs evals.}

\begin{figure}[h!]
\centering
\begin{subfigure}[b]{0.45\textwidth}
    \centering
 \includegraphics[width=\linewidth]{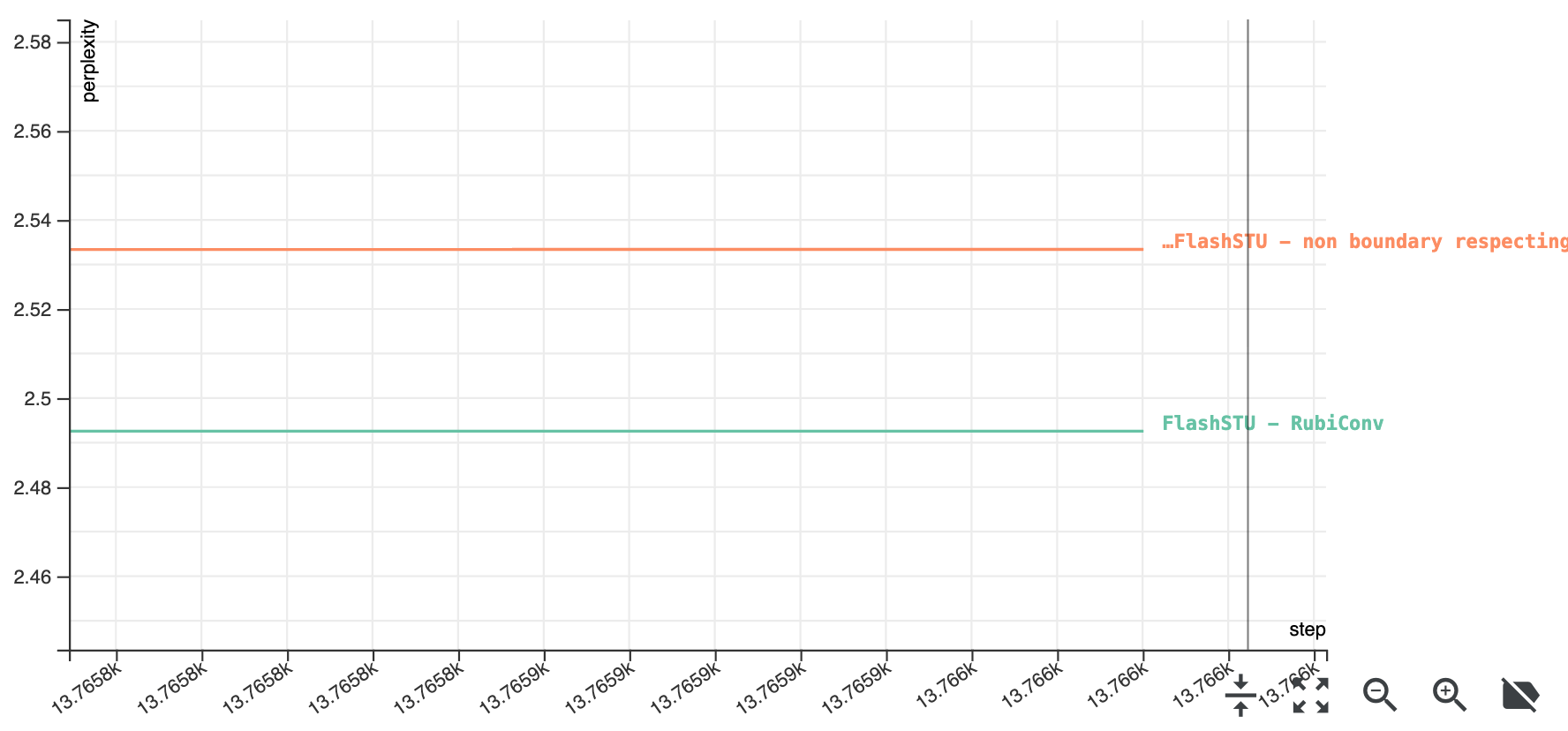} 
    \caption{google3v7 eval perplexity on MinMax 800M.}
    \label{fig:google3_rubiconv_minmax800}
\end{subfigure}
\hfill
\begin{subfigure}[b]{0.45\textwidth}
    \centering
 \includegraphics[width=\linewidth]{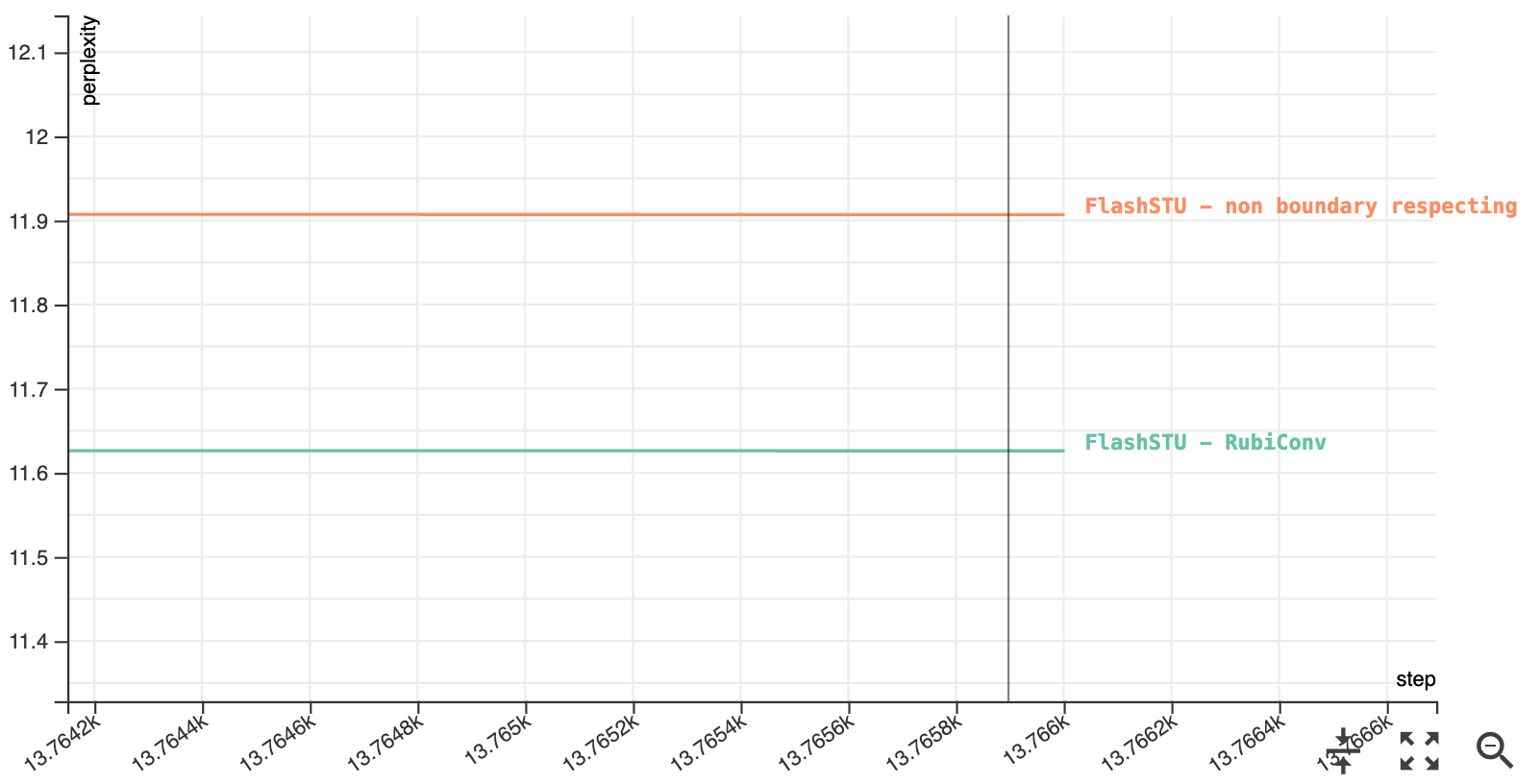} 
    \caption{gdocsv0 eval perplexity on MinMax 800M.}
    \label{fig:gdocs_rubiconv_minmax800}
\end{subfigure}
\caption{Eval perplexity for MinMax800M FlashSTU models comparing RubiConv based convolution with non-boundary respecting convolution.}
\label{fig:rubiconv_minmax800}
\end{figure}

\linda{modify the section below once we have jaxline codebase results}

As shown in Figure~\ref{fig:quality_loss} and Figure~\ref{fig:quality_ppl}, the model trained with RubiConv consistently achieves lower perplexity, demonstrating a clear improvement in model quality. This confirms that preventing information leakage across document boundaries is not just a matter of correctness but is essential for achieving the best possible performance from long-convolutional architectures.

\begin{figure}[h!]
    \centering
    \caption{Training and validation loss curves comparing the baseline (standard FFT) and RubiConv versions of the FlashSTU model. The RubiConv model demonstrates superior convergence and a lower final validation loss.}
    \label{fig:quality_loss}
\end{figure}
\eh{figures TBD}
\begin{figure}[h!]
    \centering
    \caption{Final test set perplexity for the baseline and RubiConv versions of the FlashSTU model. The lower perplexity of the RubiConv model indicates a significant improvement in language modeling quality.}
    \label{fig:quality_ppl}
\end{figure}

}

\subsection{The Necessity of Boundary-Aware Convolutions}
\label{sec:synthetic_tasks}

Packing documents into a single sequence for efficient training causes standard convolutions to mix representations across boundaries. Evaluating a hybrid model on synthetic packed-sequence tasks reveals that standard convolutions conflate independent contexts, dropping accuracy to near-chance at high packing densities. In contrast, boundary-respecting convolutions sustain high accuracy, proving that strict boundary preservation is a functional prerequisite for reliable reasoning.

\paragraph{Tasks and Training}
We evaluate two synthetic tasks (Table~\ref{tab:task_params}): \textbf{Noisy Recall}~\cite{poli2024mechanistic} and \textbf{Associative Retrieval}~\cite{ba2016using}. Models train on sequences of 24 dynamically generated, concatenated documents. For Noisy Recall the data consists of a target symbol and a variable number of distractors, e.g. $[v,\; n_1 \ldots n_k,\; \texttt{ASK},\; v]$ and the task is to recall the target symbol $v$. Noisy Recall tightly packs variable-length documents, right-padding to a maximum length. Associative Retrieval uses fixed-length documents with entity--attribute pairs, e.g. $[N_1\!\texttt{ likes }A_1\texttt{.}\;\ldots\;\texttt{What likes }N_q\texttt{?}\;\to A_q]$, sampled without replacement to prevent contradictions; query targets are chosen uniformly at random. To enable boundary-aware masking, models receive per-position document IDs. Cross-entropy loss is computed exclusively at the \emph{answer position} per document and AdamW optimizer is used ($\beta_1{=}0.9$, $\beta_2{=}0.95$, weight decay $0.01$, LR $3{\times}10^{-3}$). Experiments were executed on a single Google Cloud TPU v2. Performance is evaluated via \emph{answer accuracy} (argmax prediction matching the target) over 8 test batches, reporting averages and 95\% confidence intervals across 5 seeds.

\paragraph{Model Architecture}
We employ a lightweight hybrid architecture interleaving standard multi-head causal self-attention blocks with learned long-convolution layers. For the convolutions, we use learned depthwise causal filters of shape $(T, d)$, where $T$ is the sequence length and $d$ is the model width. We use sequence-length filters to evaluate the convolution's capacity for long-range dependency modeling. Token and positional embeddings are added at the input, and a final layer norm and linear head project to next-token logits.
For both tasks, we train a 2-layer model with model dimension $d=64$, $4$ attention heads and training batch size of $4$.

\begin{table}[h!]
\centering
\resizebox{\textwidth}{!}{%
\begin{tabular}{lcc}
\toprule
 & \textsc{Noisy Recall} & \textsc{Associative Retrieval} \\
\midrule
\textbf{Document format} & $[v,\; n_1 \ldots n_k,\; \texttt{ASK},\; v]$
   & $[N_1\!\texttt{ likes }A_1\texttt{.}\;\ldots\;\texttt{What likes }N_q\texttt{?}\;\to A_q]$ \\
\textbf{Answer token}  & Target symbol $v$  & Attribute name $A_q$ \\
\textbf{Documents per sequence}         & 24  & 24 \\
\textbf{Vocab size}     & 258 (256 symbols + 2 special)  & 15 (5 names + 5 attrs + 5 special) \\
\textbf{Max distractors / facts}  & $k{\le}30$ distractors  & exactly 2 facts per doc \\
\bottomrule
\end{tabular}%
}
\caption{Synthetic tasks configurations.}
\label{tab:task_params}
\end{table}

\paragraph{Convolution Modes}
To isolate the effect of convolution boundary mixing, we ablate two variants while keeping the architecture and standard causal attention fixed. Crucially, attention crosses document boundaries in both settings; therefore, any performance discrepancy is solely attributable to the convolution layer.
We compare a \emph{boundary-respecting} variant, where document-ID metadata masks the kernel to restrict aggregation to intra-document tokens, against a \emph{document-mixing} variant, which applies the kernel causally across the entire packed sequence.


\paragraph{Results}
As illustrated in Figure~\ref{fig:synthetic_tasks}, boundary-respecting convolutions demonstrate rapid and stable convergence, achieving full accuracy well before the end of the training window on both tasks. In contrast, standard document-mixing convolutions severely delay learning, exhibit high variance, and fail to reach comparable accuracy within 20,000 steps.

\begin{figure}[h!]
\centering
\begin{subfigure}[b]{0.4\textwidth}
    \centering
 \includegraphics[width=\linewidth]
 {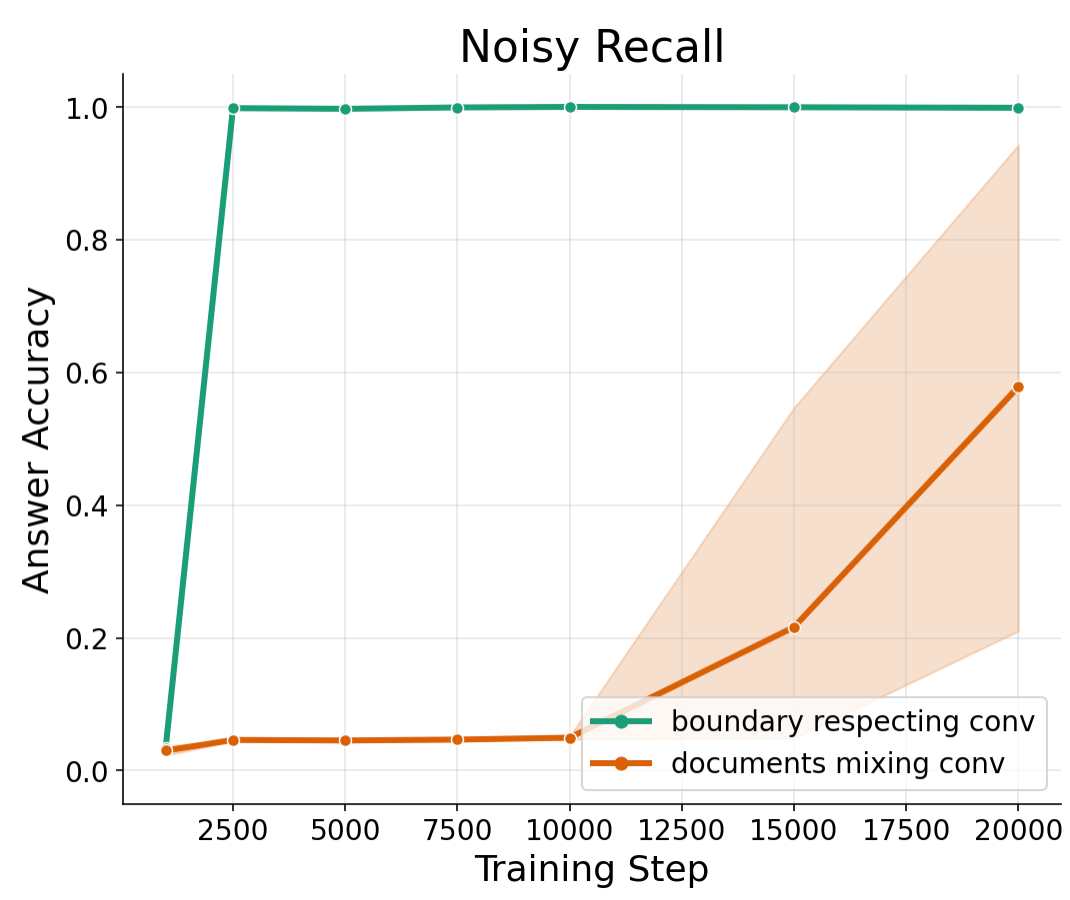} 
    \caption{Noisy Recall accuracy.}
    \label{fig:synthetic_tasks_noisy_recall}
\end{subfigure}
\hfill
\begin{subfigure}[b]{0.4\textwidth}
    \centering
 \includegraphics[width=\linewidth]
 {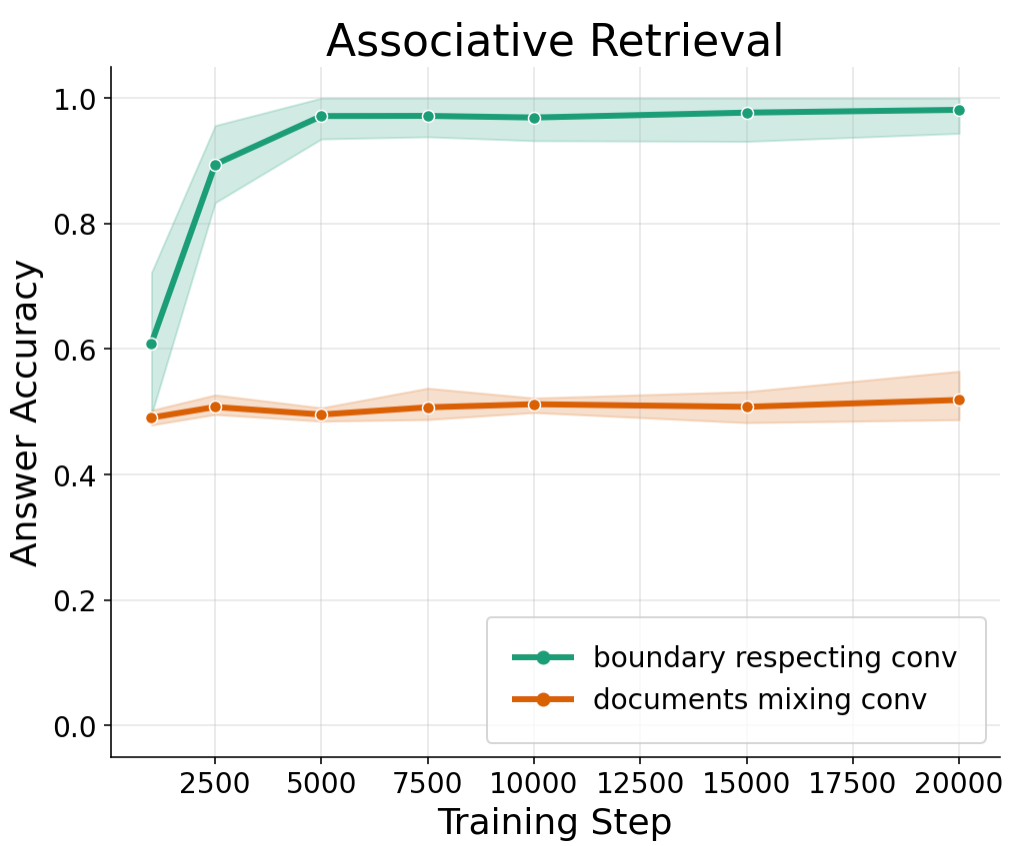} 
    \caption{Associative Retrieval accuracy.}
    \label{fig:synthetic_tasks_associative_lookup}
\end{subfigure}
\caption{Answer accuracy vs training duration comparison for boundary respecting and documents mixing convolutions. We report the average of 5 training runs with 95\% confidence intervals.}
\label{fig:synthetic_tasks}
\end{figure}
\vspace*{-0.22cm}
\section{Conclusion}

We have addressed the critical challenge of applying fast, FFT-based convolutions to the packed sequences commonly used in large-scale training pipelines. While standard FFTs are asymptotically efficient, their circular nature makes them incompatible with this data format, creating a gap between their theoretical promise and practical utility. This paper introduces \textbf{\newalg}, a novel and performant algorithm that definitively solves this problem. By adapting the matrix-decomposition approach of Bailey's FFT, \newalg{} performs correct, boundary-respecting convolutions on packed data without resorting to inefficient iterative processing or wasteful padding.

The core innovation of \newalg{} is its reformulation of the packed convolution problem as a series of parallel matrix multiplications. By structuring the second DFT operator as a \textbf{block-diagonal matrix}, we mathematically guarantee that no information is mixed across document boundaries, all within a single computational pass. This approach maps elegantly to modern hardware accelerators, leveraging highly optimized GEMM kernels to achieve significant speedups over existing baselines and attention mechanisms. While its theoretical complexity is $\Theta(N^{3/2})$, our experiments confirm it is the most practical and efficient solution for the sequence lengths found in real-world workloads.

To provide a complete theoretical analysis, we also presented alternative boundary-preserving formulations, including an asymptotically optimal $O(N \log N)$ Cooley-Tukey variant (\ctversion)  and a direct $O(N^2)$ matrix baseline. By providing a correct, performant, and scalable solution for this ubiquitous data format, \newalg{} bridges the final gap to deploying long convolutional models efficiently at scale. This work paves the way for the wider adoption and continued improvement of a powerful class of architectures for long-sequence modeling.

\newpage 

\bibliographystyle{plain}
\bibliography{main}

\newpage

\newpage

\appendix

\section{Appendix}
\subsection{The Crossover Point of Algorithmic vs Hardware Efficiency}
A central motivation for using FFT-based convolutions is their favorable asymptotic complexity of $O(N \log N)$ compared to the $O(N^2)$ of direct convolution or standard attention mechanisms. However, in practice, this theoretical advantage is mediated by the underlying hardware. Modern accelerators like GPUs and TPUs have highly optimized, low-level kernels for dense matrix-matrix and matrix-vector multiplications, which can outperform algorithmically superior methods at smaller problem sizes.

To empirically ground the need for efficient, long-sequence convolution algorithms, we conducted an experiment to identify the ``crossover point'' where the algorithmic efficiency of the Fast Fourier Transform overtakes the raw performance of a hardware-optimized dense matrix-vector multiplication.

\subsection*{Experimental Setup}

We benchmarked three distinct operations on CPU, GPU, and TPU backends across a range of sequence lengths ($N = 2^8$ to $N = 2^{16}$):

\begin{enumerate}
    \item \textbf{Random Matrix-Vector Multiplication ($O(N^2)$):} A multiplication between a dense, unstructured random matrix and a vector. This measures the baseline performance of a brute-force, hardware-optimized GEMV (General Matrix-Vector) operation.
    
    \item \textbf{DFT Matrix-Vector Multiplication ($O(N^2)$):} A multiplication using an explicitly constructed, dense DFT matrix. This isolates the effect of matrix structure, if any, without leveraging a specialized algorithm.
    
    \item \textbf{FFT Algorithm ($O(N \log N)$):} The standard, highly optimized JAX FFT implementation, which uses an efficient Cooley-Tukey-style algorithm.
\end{enumerate}

\subsection*{Results and Analysis}

The results, shown below, clearly illustrate the trade-off between hardware specialization and algorithmic complexity.

\begin{figure}[h!]
\centering
\begin{subfigure}[b]{0.32\textwidth}
    \centering
 \includegraphics[width=\linewidth]{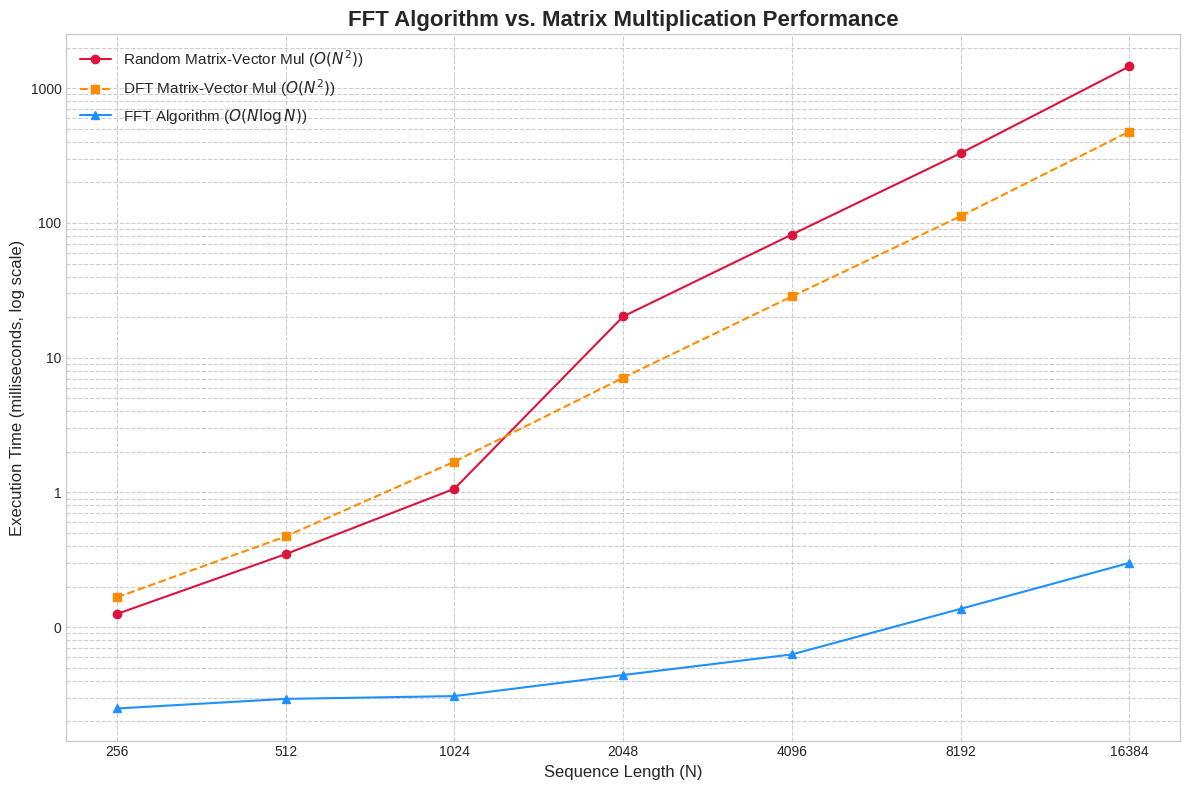} 
    \caption{Performance on CPU. The algorithmic advantage of the FFT is apparent even at relatively small sequence lengths, with a crossover point occurring early.}
    \label{fig:cpu_perf}
\end{subfigure}
\hfill
\begin{subfigure}[b]{0.32\textwidth}
    \centering
 \includegraphics[width=\linewidth]{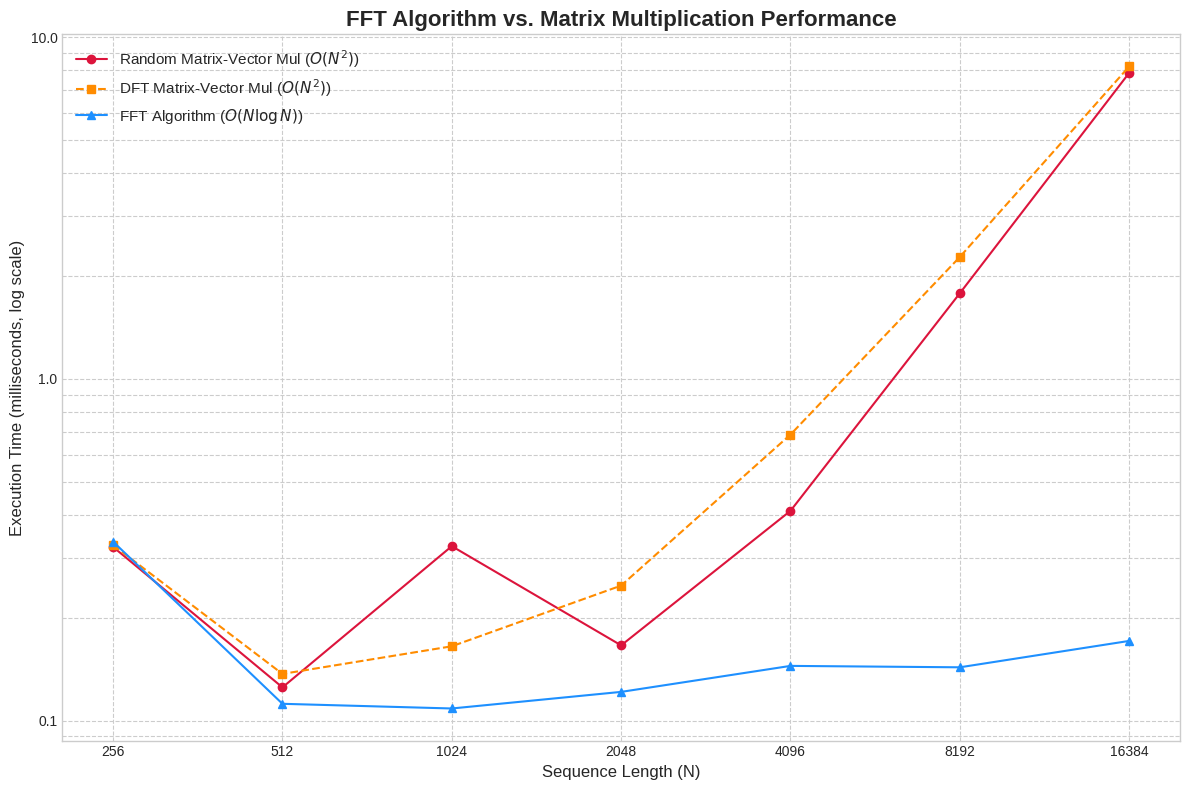} 
    \caption{Performance on GPU (NVIDIA T4). The crossover point is pushed to a larger sequence length due to the GPU's parallelism, but the FFT's superior scaling eventually dominates.}
    \label{fig:gpu_perf}
\end{subfigure}
\hfill
\begin{subfigure}[b]{0.32\textwidth}
    \centering
 \includegraphics[width=\linewidth]{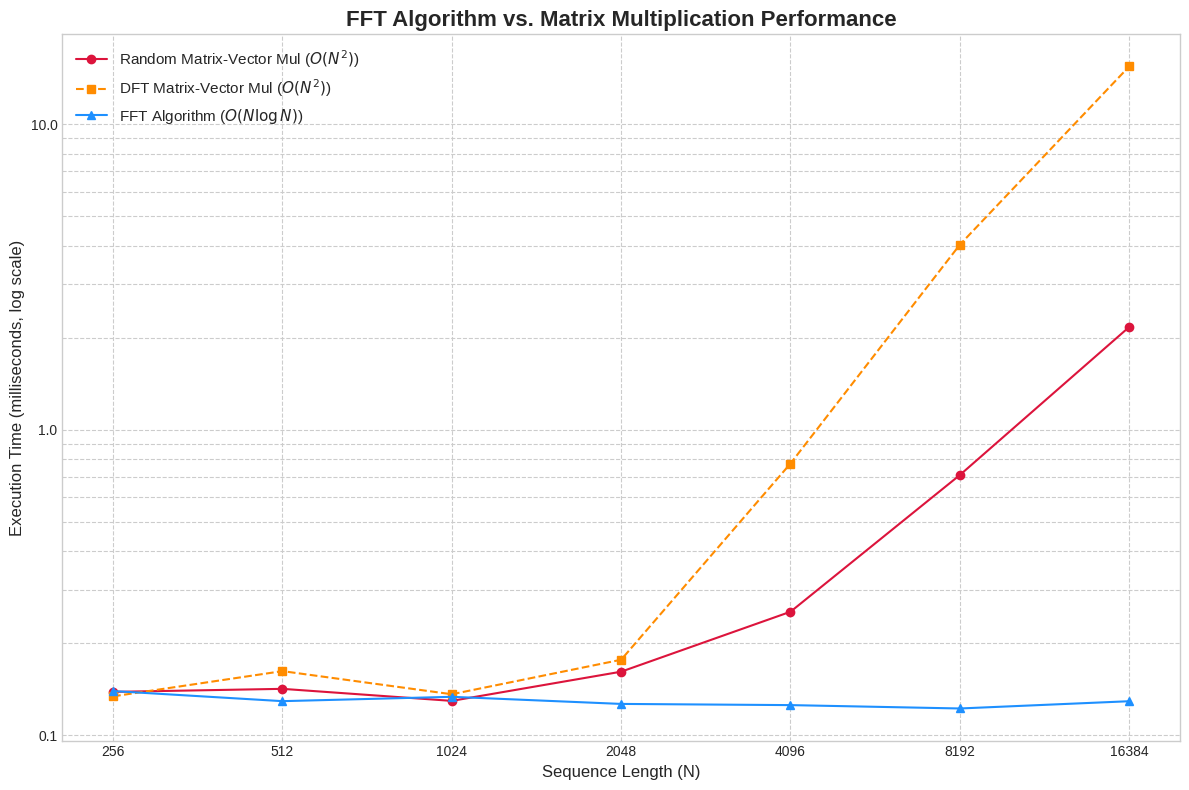} 
    \caption{Performance on TPU (v3). The effect is even more pronounced, with the crossover at a very large sequence length. For modern LLMs, the $O(N \log N)$ FFT is essential.}
    \label{fig:tpu_perf}
\end{subfigure}
\caption{Performance comparison of matrix-vector multiplication methods on different hardware architectures. These plots show the point at which the algorithmic efficiency of the FFT ($O(N \log N)$) overcomes the hardware optimization for dense matrix operations ($O(N^2)$).}
\label{fig:all_perf}
\end{figure}

These results confirm that while hardware optimizations can make brute-force methods competitive for shorter sequences, algorithmic efficiency is paramount for scaling to the long contexts required by state-of-the-art models. This provides a strong motivation for developing methods like \newalg, which make the theoretical efficiency of FFT-based convolutions a practical reality in production settings.

\subsection{Formal {\newalg} Guarantee: Proof of Theorem~\ref{thm:rubiconv_guarantee}}
\label{append:rubiconv_proof} 
\begin{proof}[Proof of Theorem~\ref{thm:rubiconv_guarantee}]
From Algorithm~\ref{alg:rubiconv} we have the following notation and assumptions:
\begin{itemize}
    \item We have $n$ vectors $\mat{x}^{(1)}, \dots, \mat{x}^{(n)}$ each with length $L_i' = k \times m_i$.
    \item $\omega_n$ is the $n$-th root of unity $\exp(2 \pi i/n)$ 
    \item $\mat{F}_k$ is the $k \times k$ matrix with $(a,b)$-th entry $\omega_k^{ab}$.
    \item $\mat{X}^{(i)}$ is the $k \times m_i$ matrix of $\mat{x}^{(i)}$ reshaped in row-major order,
    \begin{equation*}
        X^{(i)} = \begin{bmatrix}
            x^{(i)}_0 & x^{(i)}_1 & \dots & x^{(i)}_{m_i-1} \\
            x^{(i)}_{m_i} & x^{(i)}_{m_i+1} & \dots & x^{(i)}_{2 m_i-1}  \\
            \vdots & && \vdots \\
            x^{(i)}_{(k-1)m_i} &&& x^{(i)}_{km_i - 1}
        \end{bmatrix}
    \end{equation*}
    \item $\mat{T}^{(i)}$ is the $k \times m_i$ twiddle-matrix with $(a,b)$-th entry $\omega_{L_i'}^{ab}$
\end{itemize}
First we show that it suffices to consider the result of a single document since the algorithm is equivalent to an algorithm that handles each separately. Indeed, first it constructs the matrix
\begin{equation*}
    \mat{Y} \gets \begin{bmatrix}
        \mat{X}^{(1)} & \mat{X}^{(2)} & \dots & \mat{X}^{(n)}.
    \end{bmatrix}
\end{equation*}
Then it left multiplies by $\mat{F}_k$,
\begin{equation*}
    \mat{Y}' \gets \mat{F}_k \mat{Y} = \begin{bmatrix}
       \mat{F}_k \mat{X}^{(1)} & \mat{F}_k \mat{X}^{(2)} & \dots & \mat{F}_k \mat{X}^{(n)}.
    \end{bmatrix}
\end{equation*}
Next it applies the twiddle factors through element-wise multiplication. The twiddle matrix is $\mat{T} = \begin{bmatrix}
    \mat{T}^{(1)} & \mat{T}^{(2)} & \dots & \mat{T}^{(n)}
\end{bmatrix}$
where each $\mat{T}^{(i)}$ has the same dimensions as $\mat{X}^{(i)}$. Therefore
\begin{equation*}
    \mat{Y}'' \gets \mat{T} \odot \mat{Y}' = \begin{bmatrix}
       \mat{T}^{(1)} \odot \mat{F}_k \mat{X}^{(1)} & \mat{T}^{(2)} \odot  \mat{F}_k \mat{X}^{(2)} & \dots & \mat{T}^{(n)} \odot \mat{F}_k \mat{X}^{(n)}
    \end{bmatrix}.
\end{equation*}
Next it right multiplies by $\mat{M}_2$
\begin{equation*}
    \mat{M}_2 =  \texttt{BlockDiagonal}[\mat{F}_{m_1}, \mat{F}_{m_2},\dots, \mat{F}_{m_n}] = \begin{bmatrix}
        \mat{F}_{m_1} & & & \\
        & \mat{F}_{m_2} & & \\
        & & \ddots & \\
        & & & \mat{F}_{m_n}
    \end{bmatrix}.
\end{equation*}
Notice that $\mat{F}_{m_i}$ has dimension $m_i \times m_i$ and $ \mat{T}^{(i)} \odot  \mat{F}_k \mat{X}^{(i)} $ has dimension $k \times m_i$. Therefore,
\begin{equation*}
    \mat{Y}''' \gets \mat{Y}'' \mat{M}_2 =  \begin{bmatrix}
       \left( \mat{T}^{(1)} \odot \mat{F}_k \mat{X}^{(1)} \right) \mat{F}_{m_1} & \left( \mat{T}^{(2)} \odot  \mat{F}_k \mat{X}^{(2)}  \right) \mat{F}_{m_2} & \dots & \left( \mat{T}^{(n)} \odot \mat{F}_k \mat{X}^{(n)} \right) \mat{F}_{m_n}
    \end{bmatrix}.
\end{equation*}
Therefore, it suffices to consider $ \left( \mat{T}^{(i)} \odot \mat{F}_k \mat{X}^{(i)} \right) \mat{F}_{m_i} $ for a single document since they are non-interacting. This expression is Bailey's algorithm for the $L_i' = (k \times m_i)$-point DFT, which is guaranteed to output a matrix 
\begin{equation*}
    \left( \mat{T}^{(i)} \odot \mat{F}_k \mat{X}^{(i)} \right) \mat{F}_{m_i} = \begin{bmatrix}
        \overline{\mat{x}}_0 & \overline{\mat{x}}_k & \dots & \overline{\mat{x}}_{(m_i-1) k} \\
        \overline{\mat{x}}_1 & \overline{\mat{x}}_{k+1} & \dots & \overline{\mat{x}}_{(m_i-1) k + 1} \\
        \vdots & & & \vdots \\
        \overline{\mat{x}}_{k-1} &  \overline{\mat{x}}_{2 k-1} & \dots & \overline{\mat{x}}_{m_i k - 1}
    \end{bmatrix},
\end{equation*}
such that $\overline{\mat{x}}_j = \sum_{\ell = 0}^{L_i' - 1} \omega_{L_i'}^{j \ell} \mat{x}_{\ell}^{(i)}$. Therefore, if we flatten the above expression in column-major order we get $\begin{bmatrix}\overline{\mat{x}}_0 &  \dots & \overline{\mat{x}}_{L_i' - 1} \end{bmatrix}$ which is the $L_i'$-point DFT of $\mat{x}^{(i)}$.
Finally, by definition $\mat{P}_2$ is the map which flattens $\mat{Y}'''$ in column-major order for each document separately and concatenates them so that they maintain the packing order.

Next we prove the computational complexity analysis. 
First we note that the computational complexity of the data pre-processing step is bounded by the size of the outputs it must create. 

Given fixed value $k$, filter length $L_F$, and document lengths
$L_1, \dots, L_n$ (for a fixed batch), {\newalg} increases each
document length by at most $L_i + k$ so
$L_i' \leq 2L_i + k$ and
$m_i \leq 2L_i/k + 1$.

\ignore{Given fixed value $k$ and document lengths $L_1, \dots, L_n$ (for a fixed batch), {\newalg} increases each document length by at most $k$ so $L_i' \leq L_i + k$ and $m_i \leq (L_i/k) + 1$.} 
Let $L_{\text{total}} = L_1 + \dots + L_n$ and $m_{\text{total}} := \sum_{i = 1}^n m_i \leq (2L_{\text{total}}/k) + n$. Ignoring precision, $\mat{T}$ requires at most $O(k m_{\text{total}})$ bits, $\mat{M}_2$ requires at most $O(m_{\text{total}}^2)$ bits, the map $\mat{P}_1$ can be represented in $O(L_{\text{total}} + kn)$ bits, and the map $\mat{P}_2$ can be represented in $O(k m_{\text{total}})$ bits. Therefore the data preprocessing has complexity and memory requirement (per batch and ignoring precision) of $O\left(L_{\text{total}} + kn + (L_{\text{total}}/k)^2 + n^2 + 2L_{\text{total}}n/k \right)$. Picking $k = O(\sqrt{L_{\text{total}}})$ gives a complexity of $O \left( L_{\text{total}} + n^2 + L_{\text{total}}^{1/2} n \right) $. Then as long as $n \leq O(\sqrt{L_{\text{total}}})$ we have that the data preprocessing has cost $O(L_{\text{total}})$.

Next we consider the cost of the main {\newalg} algorithm. Assume a standard dense matrix multiplication with cubic cost. Observe that $\mat{X}$ and $\mat{T}$ have shape $k \times m_{\text{total}}$, $\mat{M}_1$ has shape $k \times k$, and $\mat{M}_2$ has shape $m_{\text{total}} \times m_{\text{total}}$ Therefore the main costs of the algorithm are as follows:
\begin{itemize}
    \item Left multiply by the first $k$-point DFT matrix $\mat{M}_1$ costs $\Theta(k^2 m_{\text{total}})$
    \item Element wise multiply by adaptive twiddle matrix $\mat{T}$ costs $\Theta(k m_{\text{total}})$
    \item Right multiply by the adaptive block-diagonal DFT matrix $\mat{M}_2$ costs $\Theta(k m_{\text{total}}^2)$
\end{itemize}
Therefore the total cost is $\Theta \left(k^2 m_{\text{total}} + k m_{\text{total}}^2  \right) $. Assume we pick $k = L_{\text{total}}^{1/2}$. Using that $m_{\text{total}} \leq (2L_{\text{total}}/k) + n$ yields that the complexity is $\Theta \left( L_{\text{total}}^{3/2} + nL_{\text{total}} + n^2 L_{\text{total}}^{1/2} \right)$. Therefore, whenever the number of packed documents $n$ is bounded by $L_{\text{total}}^{1/2}$ we obtain total cost $\Theta \left( L_{\text{total}}^{3/2} \right)$, which is the same cost as Bailey's algorithm if $n=1$ and no packing were present.

\end{proof}

\section{Implementation Details of {\newalg}} \label{sec:rubi-optimizations}

Algorithm~\ref{alg:rubiconv} only gives a description of the pseudcode, however several additional steps are required to make it performant when implemented on a TPU. These are detailed in this section. 


\paragraph{2D Frequency-Domain Reshaping.} To minimize computational overhead and avoid unnecessary memory operations, our implementation maintains the data in its two-dimensional matrix form following the forward \newalg{} transform. Rather than unpadding and flattening the data back to its original 1D formulation, the element-wise multiplication between the batch and filter representations is performed directly within this 2D frequency domain. However, projecting this product back via the inverse transform (iFFT) requires a structural realignment; the 2D matrix must be reshaped to ensure the data sequences are correctly ordered prior to the inverse operations. The explicit algorithm detailing the construction of this intermediate reshaping matrix is described in Algorithm ~\ref{alg:rubiconv_reshape}.

\begin{algorithm}[H]
\caption{\newalg: Pre-IFFT Reshaping Map Construction}
\label{alg:rubiconv_reshape}
\begin{algorithmic}[1]
\Function{\newalg-IFFT-Reshape-Map}{\text{document\_lengths}, $k$}
    \For{$j=1, \dots, \text{batch\_size}$}
        \State $[L_1, \dots, L_{n_j}] \gets \text{document\_lengths}[j] $
        \State $c_{\text{start}} \gets 1$ \Comment{Initialize global column tracker}
        \For{$i = 1, \dots, n_j$}
            \State $r \gets (k - (L_i \pmod k)) \pmod k$ 
            \State $m_i \gets (L_i + r) / k$ \Comment{Number of columns for document $i$}
            
            \For{$u = 1, \dots, k$} \Comment{Iterate over rows}
                \For{$v = 1, \dots, m_i$} \Comment{Iterate over local columns}
                    \State $c \gets c_{\text{start}} + v - 1$ \Comment{Global source column index}
                    
                    \State \Comment{Calculate 1D frequency index $f$ (0-indexed) from column-major FFT output}
                    \State $f \gets (v - 1) \cdot k + (u - 1)$ 
                    
                    \State \Comment{Map $f$ to row-major destination coordinates $(u', v')$ for the IFFT}
                    \State $u' \gets \lfloor f / m_i \rfloor + 1$ 
                    \State $v' \gets (f \pmod{m_{i}}) + 1$ 
                    \State $c' \gets c_{\text{start}} + v' - 1$ \Comment{Global destination column index}
                    
                    \State \Comment{Store the source coordinate $(u, c)$ at the destination $(u', c')$}
                    \State $\mat{P}_{\text{row}}[j](u', c') \gets u$
                    \State $\mat{P}_{\text{col}}[j](u', c') \gets c$
                \EndFor
            \EndFor
            \State $c_{\text{start}} \gets c_{\text{start}} + m_i$ \Comment{Advance global column tracker}
        \EndFor
    \EndFor
    \State \Return $\mat{P}_{\text{row}}, \mat{P}_{\text{col}}$
\EndFunction
\end{algorithmic}
\end{algorithm}

\paragraph{Precomputation of Adaptive Structures for Multi-Layer Models}
To accelerate model training and eliminate redundant computations in a multi-layer architecture, we optimize the generation of data-dependent components. A key computational challenge is that the twiddle matrix $\mat{T}$ and the second, block-diagonal DFT matrix $\mat{M}_2$ are \textit{adaptive}—they depend on the specific lengths of the documents packed within the input batch. To avoid runtime overhead during training, the required matrix construction and data preparation (including the grid reshaping maps $\mathcal{M}_{\text{row}}, \mathcal{M}_{\text{col}}$) are performed only once as a preprocessing step. Since these structures depend exclusively on the static document boundaries of the batch, they remain invariant across all layers. Sharing these precomputed structures across the network fully amortizes the construction cost, ensuring that the adaptive nature of the transform accelerates, rather than bottlenecks, overall training throughput.



\paragraph{Optimizing Dual Real FFTs via Conjugate Symmetry}
To eliminate the overhead of computing separate forward transforms for a real-valued input batch $\mat{B}$ and filter $\mat{F}$, we pack them into a single complex tensor $\mat{Z} = \mat{B} + i\mat{F}$ and apply a single forward pass of \newalg-FFT (i.e., the core frequency operations from Algorithm 1, bypassing the $\mat{P}_1$ and $\mat{P}_2$ reshaping maps). Because real-valued signals exhibit conjugate symmetry ($\mat{B}(j) = \mat{B}^*(-j)$ and $\mat{F}(j) = \mat{F}^*(-j)$), we isolate the individual transforms directly from $\mat{Z}(j)$ using $\mathcal{O}(N)$ arithmetic, bypassing a second FFT transform entirely:
\begin{align}
    \mat{B}(j) &= \frac{1}{2} \left( \mat{Z}(j) + \mat{Z}^*(-j) \right), \quad
    \mat{F}(j) = \frac{1}{2i} \left( \mat{Z}(j) - \mat{Z}^*(-j) \right)
\end{align}
To compute the inverse transform after point-wise frequency multiplication, we reuse the forward kernel via the standard identity $\text{IFFT}(\mat{X}) = \frac{1}{N} \text{FFT}(\mat{X}^*)^*$. The full procedure is detailed in Algorithm \ref{alg:optimized_rubiconv}.


\begin{algorithm}[H]
\caption{Optimized \newalg{} Convolution (Real-to-Complex Packing)}
\label{alg:optimized_rubiconv}
\begin{algorithmic}[1]
\Function{Optimized-\newalg-Conv}{$\mat{B}, \mat{F}, \text{document\_lengths}, \mat{P}_1, \mat{P}_{\text{ifft\_row}}, \mat{P}_{\text{ifft\_col}}, \mat{P}_2, k$}
    
    \State \Comment{\textbf{1. Padding \& Packing (Implicit Batch Dimension)}}
    \State $\mat{B}_{\text{pad}}, \mat{F}_{\text{pad}} \gets \Call{Pad-To-Grid}{\mat{B}, \mat{F}, \text{document\_lengths}, k}$
    \State $\mat{Z} \gets \mat{B}_{\text{pad}} + i\,\mat{F}_{\text{pad}}$ \Comment{Pack real signals into a single complex tensor}

    \State \Comment{\textbf{2. Forward FFT}}
    \State $\mat{Z}' \gets \Call{\newalg-FFT}{\Re(\mat{Z}),\; \Im(\mat{Z}),\; \text{document\_lengths},\; k}$ \Comment{Returns 2D grid (Alg 1)}

    \State \Comment{\textbf{3. Per-Document Frequency Reversal \& Symmetry Recovery}}
    \State \Comment{For document $d$ with length $N_d = M_d \times C_d$, grid indices $m \in [0, k)$ and $c \in [0, C_d)$:}
    \State $\mat{Z}^{\text{rev}}(<m, c>) \gets \mat{Z}'\!\left((-m) \bmod k,\; (-c - \mathbf{1}_{m>0}) \bmod C_{d(c)}\right)$ \label{line:freq_rev}
    \State $\mat{B}' \gets \tfrac{1}{2}\left(\mat{Z}' + \overline{\mat{Z}^{\text{rev}}}\right)$
    \State $\mat{F}' \gets -\tfrac{i}{2}\left(\mat{Z}' - \overline{\mat{Z}^{\text{rev}}}\right)$

    \State \Comment{\textbf{4. Point-wise Multiplication}}
    \State $\mat{S} \gets \mat{B}' \odot \mat{F}'$

    \State \Comment{\textbf{5. Pre-IFFT Reshaping \& Inverse FFT}}
    \State $\mat{S}_{\text{reshaped}} \gets \mat{S}\!\left(\mat{P}_{\text{ifft\_row}},\; \mat{P}_{\text{ifft\_col}}\right)$
    \State $\mat{Y} \gets \Call{\newalg-FFT}{\Re(\mat{S}_{\text{reshaped}}),\; -\Im(\mat{S}_{\text{reshaped}}),\; \text{document\_lengths},\; k}$
    
    \State \Comment{\textbf{6. Final Scaling \& Unpadding}}
    \State $\mat{Y}_{\text{scaled}}(<m, c>) \gets \Re\!\left(\mat{Y}(<m, c>)\right) \big/ N_{d(c)}$ \Comment{$d(c)$ maps column to document index}
    \State \Return $\mat{P}_2\!\left(\mat{Y}_{\text{scaled}}\right)$
    
\EndFunction
\end{algorithmic}
\end{algorithm}

\paragraph{Arithmetic Intensity and Karatsuba Multiplication}
Because \newalg{} is fundamentally formulated via dense matrix multiplications (GEMMs) of complexity $\mathcal{O}(N^{3/2})$, these operations dominate the execution time. To optimize these internal GEMMs, we apply the Karatsuba algorithm~\cite{karatsuba1962multiplication}. This reduces the required real multiplications from four to three per complex product, substituting them with cheaper additions and lowering the overall arithmetic intensity of the transform. We retain standard complex multiplication for the element-wise twiddle factor application, as its $\mathcal{O}(N)$ complexity does not justify the addition overhead of Karatsuba.

At the macro level, packing the batch and filter into a single forward pass directly halves this dominating computational overhead. More crucially for bandwidth-bound accelerators, executing one transform instead of two halves the number of heavy kernel launches and global memory accesses. Finally, compilers like XLA aggressively fuse the $\mathcal{O}(N)$ frequency-reversal and arithmetic recovery steps alongside the Karatsuba additions, ensuring these auxiliary operations execute entirely within fast on-chip memory with negligible latency.

\paragraph{Optimal Document Padding} 
As described in Section \ref{sec:new_alg}, \newalg{} requires padding all documents to a multiple of $k$. We ablate on the optimal value of $k$ for each sequence length by considering a fixed filter size of $2^{15}$ and a fixed model dimension of $1024$.

Figure \ref{fig:padding_doc_len_ablation} shows that for packed sequences $L_{\text{total}} \leq 2^{15}$ the optimal value is $k=256$ and for sequences $2^{16} \leq L_{\text{total}} \leq 2^{18}$ the optimal value is $k=512$.

\begin{figure}[h!]
    \centering
    \includegraphics[width=0.45\textwidth]{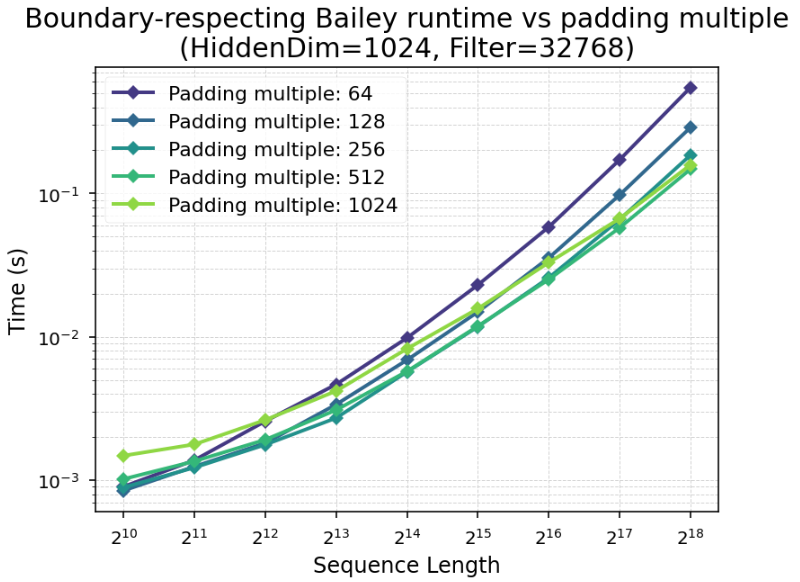}
    \caption{\newalg{} runtime across sequence length for different values of $k$, the greatest common divisors for padded documents.}
    \label{fig:padding_doc_len_ablation}
\end{figure}
\section{Alternative Boundary-Preserving Formulations} \label{sec:app_alternatives}

Section~\ref{sec:alternatives} describes \ctversion, a boundary respecting algorithm for convolutions that adapts the classic iterative Cooley-Tukey FFT algorithm. We include the algorithm illustration in Figure~\ref{fig:ctversion}.

\begin{figure}[h]
    \centering
    \includegraphics[width=\linewidth]
    {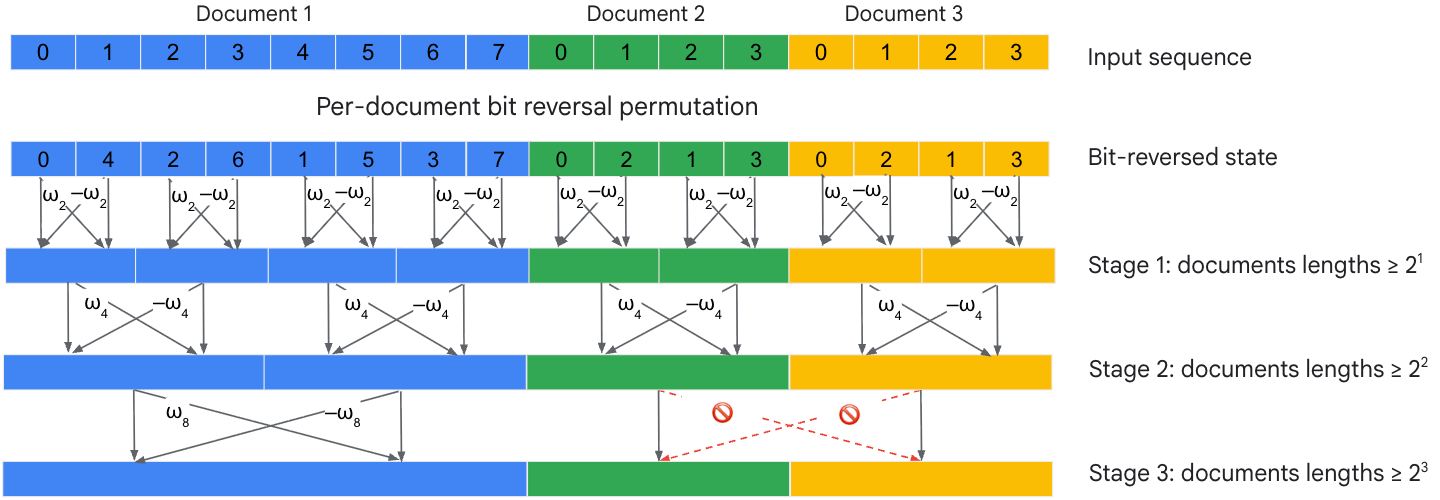}
    \caption{Visual representation of \ctversion. The top layer represents the original document. An arrow without a label represents point-wise multiplication with 1, otherwise it represents multiplication with the label.}
    \label{fig:ctversion}
\end{figure}

\ignore{

While \newalg~is our main contribution and the most practical algorithm, the principle of boundary preservation can be applied to other convolution algorithms. In this section, we present two such alternative formulations to provide a more complete theoretical picture. These methods are not empirically competitive with \newalg~for typical deep learning workloads but are valuable for demonstrating how other algorithms can be adapted to the packed sequence setting while retaining their hallmark asymptotic complexities.

\subsection{An Asymptotically Optimal $O(N \log N)$ Variant: \ctversion}

It is possible to achieve the optimal asymptotic complexity of $O(N \log N)$ by adapting the classic iterative Cooley-Tukey FFT algorithm. Our variant, \ctversion, achieves this not by altering the FFT's structure, but by \textbf{arithmetically masking} butterfly operations iteratively to prevent data from crossing document boundaries.

The core idea is to proceed through the standard Cooley-Tukey stages, where each stage handles a particular DFT size, and generate twiddle factors based on each element's parent document length. At stage $i$, the DFT size is $m=2^i$, and if a document is too short to participate in the current DFT size (i.e., its length is less than $m$), its twiddle factors are set to identity values. This effectively turns the butterfly update into a no-op for that document, preserving its state from the last valid stage. 

At a high level, the algorithm opens up the Cooley-Tukey stages and enforce document boundaries by masking in each stage, thus maintaining the asymptotic runtime of the FFT. The algorithm (detailed in Alg. 2) begins with a per-document bit-reversal and proceeds with these masked butterfly stages across the packed tensor.


\begin{figure}
    \centering
    \includegraphics[width=0.8\linewidth]{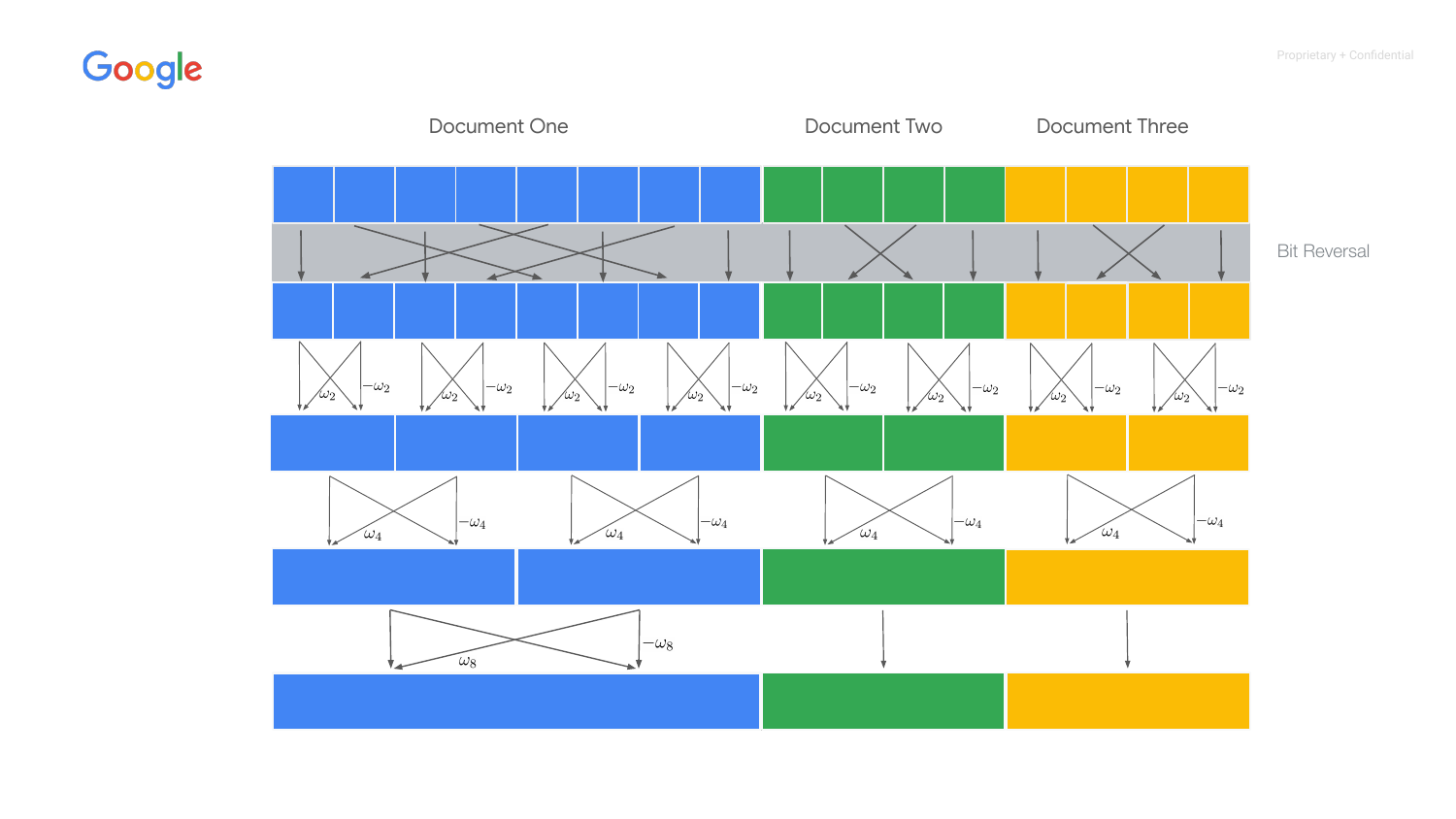}
    \caption{Visual representation of \ctversion. The top layer represents the original document. An arrow without a label represents point-wise multiplication with 1, otherwise it represents multiplication with the label.}
    \label{fig:ctversion}
\end{figure}

\paragraph{Correctness and Complexity.}
This method correctly computes the concatenation of independent, per-document FFTs. An inductive proof shows that at every stage $s$, the values within each document's slice are identical to those from a standard Cooley-Tukey FFT run on that document alone. The arithmetic masking guarantees that a butterfly operation whose endpoints would cross a document boundary is nullified for at least one of the endpoints, thereby preventing information mixing. Since the masking only alters constant-factor work per stage, the algorithm successfully preserves the $O(N \log N)$ asymptotic complexity of the standard Cooley-Tukey algorithm while operating in parallel on the packed tensor.

\subsection{A Direct $O(N^2)$ Variant via Block-Diagonal Matrices}

The most straightforward way to ensure boundary-respecting convolution is to represent the convolution operator as an explicit \textbf{block-diagonal matrix}. For a packed vector $x = [x^{(1)} \,\|\, \cdots \,\|\, x^{(B)}]$, we construct a large matrix $A = \mathrm{diag}(A_1, \ldots, A_B)$, where each block $A_b$ is the matrix operator for the convolution on document $b$. Depending on the desired padding, $A_b$ can be a circulant matrix (for circular convolution) or a Toeplitz matrix (for linear convolution).

The convolution is then a single matrix-vector product, $y = Ax$. By its block-diagonal construction, the operator cannot mix information between documents, as each block $A_b$ only acts on the corresponding input segment $x^{(b)}$.

\paragraph{Complexity.}
Forming and applying this matrix costs $\Theta(\sum_b L_b^2)$, which is upper-bounded by $\Theta(N^2)$. Although asymptotically inefficient compared to FFT-based methods, this approach serves as a simple and exact baseline. It is often competitive for very short sequences and can be useful for leveraging highly optimized GEMM backends on hardware where FFT kernels are less performant, or for validating the numerical output of more complex algorithms.

\begin{figure}[t]
\centering
\begin{tikzpicture}[x=1mm,y=1mm,>=Latex, font=\small]
\def\M{64}
\def\B{3}
\def\b{21.3}
\draw[thick] (0,0) rectangle (\M,\M);
\node[above] at (\M/2,\M+4) {$A=\mathrm{diag}(A_1,\dots,A_B)$};
\foreach \t in {0,8,...,64} {
    \draw[very thin,gray!25] (\t,0) -- (\t,\M);
    \draw[very thin,gray!25] (0,\t) -- (\M,\t);
}
\fill[gray!20] (0,\M-\b) rectangle (\b,\M);
\fill[gray!20] (\b,\M-2*\b) rectangle (2*\b,\M-\b);
\fill[gray!20] (2*\b,\M-3*\b) rectangle (3*\b,\M-2*\b);
\draw[thick] (0,\M-\b) rectangle (\b,\M);
\draw[thick] (\b,\M-2*\b) rectangle (2*\b,\M-\b);
\draw[thick] (2*\b,\M-3*\b) rectangle (3*\b,\M-2*\b);
\node at (\b/2,\M-\b/2) {$A_1$};
\node at (\b+\b/2,\M-\b-\b/2) {$A_2$};
\node at (2*\b+\b/2,\M-2*\b-\b/2) {$A_3$};
\def\xX{\M+20}
\draw[thick] (\xX,0) rectangle (\xX+10,\M);
\draw[thick] (\xX, \M-\b) -- (\xX+10,\M-\b);
\draw[thick] (\xX, \M-2*\b) -- (\xX+10,\M-2*\b);
\node[above] at (\xX+5,\M+4) {$x$};
\node[rotate=90] at (\xX-6,\M-\b/2) {$x^{(1)}$};
\node[rotate=90] at (\xX-6,\M-\b-\b/2) {$x^{(2)}$};
\node[rotate=90] at (\xX-6,\M-2*\b-\b/2) {$x^{(3)}$};
\def\yX{-20}
\draw[thick] (\yX,0) rectangle (\yX-10,\M);
\draw[thick] (\yX, \M-\b) -- (\yX-10,\M-\b);
\draw[thick] (\yX, \M-2*\b) -- (\yX-10,\M-2*\b);
\node[above] at (\yX-5,\M+4) {$y$};
\node[rotate=90] at (\yX+6,\M-\b/2) {$y^{(1)}$};
\node[rotate=90] at (\yX+6,\M-\b-\b/2) {$y^{(2)}$};
\node[rotate=90] at (\yX+6,\M-2*\b-\b/2) {$y^{(3)}$};
\draw[->,thick] (\xX, \M-\b/2) -- (\M+2, \M-\b/2);
\draw[->,thick] (\xX, \M-\b-\b/2) -- (\M+2, \M-\b-\b/2);
\draw[->,thick] (\xX, \M-2*\b-\b/2) -- (\M+2, \M-2*\b-\b/2);
\draw[->,thick] (-2, \M-\b/2) -- (\yX, \M-\b/2);
\draw[->,thick] (-2, \M-\b-\b/2) -- (\yX, \M-\b-\b/2);
\draw[->,thick] (-2, \M-2*\b-\b/2) -- (\yX, \M-2*\b-\b/2);
\node[align=center] at (\M/2,-8) {block-diagonal structure enforces \\ boundary-respecting convolution};
\end{tikzpicture}
\caption{The block-diagonal operator for the full-matrix variant. Each block $A_b$ (circulant or Toeplitz) applies an independent convolution to its corresponding document segment $x^{(b)}$, producing $y^{(b)}$.}
\label{fig:block-diag-boundary}
\end{figure}

}

\end{document}